\newtheorem{theorem}{Theorem}[section]
\newtheorem{lemma}[theorem]{Lemma}
\newtheorem{claim}[theorem]{Claim}
\newtheorem*{remark}{Remark}
\newtheorem*{nonumtheorem}{Theorem}
\newcommand{\R}{\mathbb R}
\newcommand{\dist}{\mathsf{dist}}
\newcommand{\Ex}{\mathop \mathbb{E}}
\newcommand{\Var}{\text{Var}}
\newcommand{\eps}{\epsilon}
\newcommand{\VC}{\text{VC}}
\begin{document}

\title{Teaching and compressing for low VC-dimension\footnote{A preliminary version of this work, combined with the paper ``Sample compression schemes for VC classes'' by the first and the last authors, was published in the proceeding of FOCS'15.}}

\author{Shay Moran\thanks{Departments of Computer Science, Technion-IIT, Israel and Max Planck Institute for Informatics, Saarbr\"{u}cken, Germany. {\tt  shaymrn@cs.technion.ac.il.}}
\and Amir Shpilka\footnote{Department of Computer Science, Tel Aviv University, Israel.
{\tt shpilka@post.tau.ac.il.}
The research leading to these results has received funding
from the European Community's Seventh Framework Programme (FP7/2007-2013) under grant agreement number 257575, and from the Israel Science Foundation (grant number 339/10).}
\and
Avi Wigderson\thanks{School of Mathematics, Institute for Advanced Study, Princeton NJ.
{\tt avi@ias.edu.}}
\and Amir Yehudayoff\thanks{Department of Mathematics, Technion-IIT, Israel.
{\tt amir.yehudayoff@gmail.com.}
Horev fellow -- supported by the Taub foundation.  Research is also supported by ISF and BSF.}}

\date{}

\maketitle

\begin{abstract}
In this work we study the quantitative relation between VC-dimension and two other basic parameters related to learning and teaching.
Namely, the quality of sample compression schemes and of teaching sets for classes of low VC-dimension.
Let $C$ be a binary concept class of size $m$ and VC-dimension $d$.  
Prior to this work,  the best known upper bounds for both parameters were 
$\log(m)$,
while the best lower bounds are linear in $d$. 
We present significantly better upper bounds on both as follows.
Set $k = O(d 2^d \log \log |C|)$.

We show that
there always exists a concept $c$ in $C$ with a teaching set 
(i.e. a list of $c$-labeled examples uniquely identifying $c$ in $C$) 
of size $k$.  
This problem was studied by Kuhlmann (1999).
Our construction implies that the recursive teaching (RT) dimension of $C$ is at most $k$ as well.  
The RT-dimension was suggested by
Zilles et al. and Doliwa et al. (2010).
The same notion (under the name partial-ID width) was independently studied by Wigderson and Yehudayoff (2013). 
An upper bound on this parameter that depends only on $d$  
is known just for the very simple case $d=1$, and is open even for $d=2$. 
We also make small progress towards this seemingly modest goal.

We further construct sample compression schemes
of size $k$ for $C$, with additional information of $k \log(k)$ bits. 
Roughly speaking, given any 
list of $C$-labelled examples of arbitrary length, 
we can retain only $k$ labeled examples 
in a way that allows to recover the labels of all others examples in the list,
using additional $k\log (k)$ information bits.
This problem was first suggested
by Littlestone and Warmuth (1986).
%\new{Our construction gives the first progress since more than 20 years 
%for general concept classes, a result improved later by Moran and Yehudayoff (2015).}
\end{abstract}

\newpage

\section{Introduction}

The study of mathematical foundations of learning and teaching
has been very fruitful,  revealing fundamental connections to
various other areas of mathematics, such as geometry, topology, and combinatorics.
Many key ideas and notions emerged from this study:
Vapnik and Chervonenkis's VC-dimension \cite{zbMATH03391742},
Valiant's seminal definition of PAC learning \cite{zbMATH03943062}, 
Littlestone and Warmuth's sample compression schemes \cite{littleWarm}, Goldman and Kearns's  teaching dimension~\cite{GoldmanK95}, recursive teaching dimension  (RT-dimension, for short)\cite{zbMATH06253884,DoliwaSZ10,SameiSYZ14}
and more. 

While it is known that some of these measures are tightly linked, the exact relationship between them is still not well understood. In particular, it is a long standing question whether the VC-dimension can be used to give a universal bound on the size of sample compression schemes, %\footnote{{Subsequent to this paper, the first and the last authors proved the existence of a sample compression scheme of size $\exp(d)$, for concept classes of VC-dimension $d$~(\cite{DBLP:journals/eccc/MoranY15}).}} 
or on the RT-dimension.%\footnote{\new{Subsequent to this paper, Chen, Cheng, and Tang proved that the RT-dimension is at most $\exp(d)$, for concept classes of VC-dimension $d$~(\cite{DBLP:journals/eccc/ChenCT16}).}}

%\new{There seem to be a tight connection between sample compression schemes and the RT-dimension, as suggested
%by the correspondence of the different upper bounds on these quantities in terms of the VC-dimension.}
In this work, we make progress on these two questions. First, we prove that the RT-dimension of a boolean concept class $C$ having VC-dimension $d$ is upper bounded by\footnote{In this text $O(f)$ means 
at most $\alpha f + \beta$ for $\alpha,\beta >0$ constants.} $O(d 2^d \log \log |C|)$. Secondly, we give a sample compression scheme of size  $O(d 2^d \log \log |C|)$ that uses additional information.
{Both results were subsequently improved to bounds that are independent of  the size of the concept class $C$~\cite{DBLP:journals/eccc/MoranY15,DBLP:journals/eccc/ChenCT16}}

{Our proofs are based on a similar technique of recursively applying Haussler's Packing Lemma
on the dual class. This similarity provides another example of the informal connection between sample compression schemes and RT-dimension. This connection also appears in other works that study their relationship with the VC-dimension~\cite{DoliwaSZ10,DBLP:journals/eccc/MoranY15,DBLP:journals/eccc/ChenCT16}.}
% While these results do not completely settle the relevant questions on the relation to the VC-dimension, they significantly improve previously known results. 
%Before formally stating our results we give some background and the required definitions.

%two useful constructions in learning and teaching theories.

\subsection{VC-dimension}\label{sec:vc}

\paragraph{{VC-dimension and size.}}
A concept class over the universe $X$ is a set $C\subseteq\{0,1\}^X$.
When $X$ is finite, we denote $|X|$ by $n(C)$.
The VC-dimension of $C$, denoted $\VC(C)$, is the maximum size of a shattered subset of $X$,
where a set $Y \subseteq X$ is shattered if
for every $Z \subseteq Y$ there is $c \in C$ so that $c(x)=1$ for all $x \in Z$ and $c(x)=0$ for all $x \in Y-Z$.

The most basic result concerning VC-dimension is the Sauer-Shelah-Perles Lemma, that upper bounds $|C|$ in terms of $n(C)$ and $\VC(C)$.
It has been independently proved several times, e.g.\ in \cite{zbMATH03392460}.

\begin{theorem}[Sauer-Shelah-Perles]\label{thm:Sauer}
Let $C$ be a boolean concept class with VC-dimension $d$. Then,
$$|C| \leq \sum_{k=0}^{d}{n(C) \choose k}.$$
In particular, if $d\geq 2$ then $|C|\leq n(C)^d$
\end{theorem}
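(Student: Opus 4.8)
The plan is to argue by induction on $n=n(C)$; the downward-shifting (compression) argument is an equally natural alternative, but the splitting induction is the most transparent. Fix a point $x\in X$. Let $C_x\subseteq\{0,1\}^{X\setminus\{x\}}$ be the projection of $C$ obtained by deleting the $x$-coordinate, and let $C_x^{\star}\subseteq C_x$ consist of those patterns $s$ on $X\setminus\{x\}$ whose \emph{both} extensions (namely $s$ together with $x\mapsto 0$, and $s$ together with $x\mapsto 1$) belong to $C$. Counting each $c\in C$ by its projection: the projection map $C\to C_x$ is onto, and the fiber over $s$ has size one or two, being two exactly when $s\in C_x^{\star}$. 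This yields the key identity
$$|C| \;=\; |C_x| + |C_x^{\star}|.$$

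Next I would control the two VC-dimensions produced by the split. First, $\VC(C_x)\le\VC(C)=d$, since any $Y\subseteq X\setminus\{x\}$ shattered by $C_x$ is shattered by $C$ as well. Second — and this is the one genuinely nontrivial point — $\VC(C_x^{\star})\le d-1$: if $C_x^{\star}$ shatters $Y\subseteq X\setminus\{x\}$, then $C$ shatters $Y\cup\{x\}$, because every pattern on $Y$ realized by some $s\in C_x^{\star}$ can be completed by \emph{either} value on $x$ while staying in $C$; hence $|Y|+1\le d$. With these two facts, applying the induction hypothesis to $C_x$ and to $C_x^{\star}$ — both classes on the $n-1$ points of $X\setminus\{x\}$ — and invoking Pascal's rule $\binom{n-1}{k}+\binom{n-1}{k-1}=\binom{n}{k}$ gives
$$|C| \;\le\; \sum_{k=0}^{d}\binom{n-1}{k} + \sum_{k=0}^{d-1}\binom{n-1}{k} \;=\; \binom{n-1}{0} + \sum_{k=1}^{d}\left(\binom{n-1}{k}+\binom{n-1}{k-1}\right) \;=\; \sum_{k=0}^{d}\binom{n}{k}.$$
The base cases ($n=0$, or $d\ge n$, giving $|C|\le\max\{1,2^n\}$) are immediate. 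The ``in particular'' clause then follows from the elementary estimate $\sum_{k=0}^{d}\binom{n}{k}\le n^d$ valid for $n\ge d\ge 2$, noting that $\VC(C)=d$ already forces $n\ge d$.

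The only real obstacle is the inequality $\VC(C_x^{\star})\le d-1$; once that is in hand, the remainder is routine bookkeeping with binomial coefficients. As a closing remark, the same split actually proves the sharper Pajor/Shelah statement that $|C|$ is bounded by the number of subsets of $X$ that $C$ shatters — track shattered sets rather than their sizes through the recursion — from which the theorem is immediate, since no shattered set has more than $d$ elements.
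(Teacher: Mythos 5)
The paper does not actually prove Theorem~\ref{thm:Sauer}: it states it and cites the literature (``It has been independently proved several times, e.g.\ in \cite{zbMATH03392460}''). So there is no in-paper argument to compare against. Your proof is the standard splitting/shifting induction and it is correct: the fiber-counting identity $|C| = |C_x| + |C_x^{\star}|$ is right, the dimension bounds $\VC(C_x)\le d$ and $\VC(C_x^{\star})\le d-1$ are justified exactly as they should be (the latter being the one place where something must be checked, as you flag), and Pascal's rule closes the induction. The ``in particular'' clause also holds: $\VC(C)=d$ forces $n\ge d$, and for $n\ge d\ge 2$ one has $\sum_{k=0}^{d}\binom{n}{k}\le n^d$ (easy induction on $d$, with base case $2+n\le n^2$). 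Your closing remark about the Pajor--Shelah refinement is accurate but not needed for the stated bound.
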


\paragraph{{VC-dimension and PAC learning.}}
The VC-dimension is one of the most basic complexity measures for concept classes. 
It is perhaps mostly known in the context of the PAC learning model. 
PAC learning was introduced in Valiant's seminal work \cite{zbMATH03943062}
as a theoretical model for learning from random examples drawn from an unknown distribution
{(see the book \cite{KearnsVazirani94} for more details)}.
%In the PAC model, 
%a learner who knows the concept class $C \subseteq \{0,1\}^X$ 
%wishes to learn an unknown concept $c \in C$.
%Namely, there is an unknown distribution $\mu$ on $X$
%and the learner needs to come up with an hypothesis $h \in \{0,1\}^X$
%so that $\dist_\mu(h,c)$ is small.
%To do so, he gets to observe a sequence of independent
%samples of the form $(x,c(x))$ where $x$ is drawn from $\mu$.

A fundamental and well-known result of Blumer, Ehrenfeucht, Haussler, and Warmuth~\cite{zbMATH04143473},
which is based on an earlier work of Vapnik and Chervonenkis~\cite{zbMATH03391742},
states that {PAC learning sample complexity is equivalent to} VC-dimension.
The proof of this theorem uses Theorem~\ref{thm:Sauer}
and an argument commonly known as double sampling
(see Section~\ref{sec:DoubleSampling} in the appendix for a short
and self contained description of this well known argument).

\begin{theorem}[\cite{zbMATH03391742},\cite{zbMATH04143473}]
\label{thm:PAC}
Let $X$ be a set and $C \subseteq \{0,1\}^X$ be a concept class of VC-dimension $d$.
Let $\mu$ be a distribution over $X$.
Let $\eps,\delta >0$ and $m$ an integer satisfying $2  (2m+1)^{d} (1-\eps/4)^{m}  < \delta$.
Let $c\in C$ and  $Y = (x_1,\dots,x_m)$ 
be a multiset of $m$ independent samples from $\mu$.
Then, the probability that there is $c' \in C$
so that $c|_Y = c'|_Y$ but
$\mu(\{x : c(x) \neq c'(x)\}) > \eps$ is at most $\delta$.
\end{theorem}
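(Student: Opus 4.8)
The plan is to run the classical Vapnik--Chervonenkis \emph{double sampling} (symmetrization) argument; its only nontrivial ingredient is the Sauer--Shelah--Perles bound, Theorem~\ref{thm:Sauer}. For $c'\in C$ write $D(c,c')=\{x\in X:c(x)\neq c'(x)\}$, and let $A$ be the event that some $c'\in C$ has $c|_Y=c'|_Y$ while $\mu(D(c,c'))>\eps$; we want $\Pr[A]\le\delta$. We may assume $\eps<1$, since otherwise $A=\emptyset$; and we may assume $2(2m+1)^d(1-\eps/4)^m<1$, since otherwise the asserted bound is trivial.

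First I would bring in a \emph{ghost sample} $Y'=(x_1',\dots,x_m')$, an independent copy of $Y$, and let $B$ be the event that some $c'\in C$ satisfies $c|_Y=c'|_Y$ and $\bigl|\{i\in[m]:c(x_i')\neq c'(x_i')\}\bigr|\ge\eps m/2$. The key first claim is $\Pr[B]\ge\tfrac12\Pr[A]$: on $A$ pick, measurably, a witness $c'=c'(Y)$; conditioned on $Y$, the number of indices $i$ with $x_i'\in D(c,c'(Y))$ is a sum of $m$ i.i.d.\ Bernoulli variables of mean exceeding $\eps m$, so a one-sided Chebyshev (Cantelli) estimate shows it is at least $\eps m/2$ with conditional probability at least $\tfrac12$, provided $\eps m$ exceeds a small absolute constant --- which one checks is forced by $2(2m+1)^d(1-\eps/4)^m<1$. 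Integrating over $Y$ gives the claim, so it suffices to bound $\Pr[B]$.

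Next, the symmetrization: view $(Y,Y')$ as a $2m$-tuple $(z_1,\dots,z_{2m})$ with $z_i=x_i$, $z_{m+i}=x_i'$, and for $J\subseteq[m]$ let $\sigma_J$ swap $z_i$ with $z_{m+i}$ for each $i\in J$. Since the $2m$ samples are i.i.d., $\sigma_J(Y,Y')$ has the same law as $(Y,Y')$ for every fixed $J$, so averaging over a uniform $J\in 2^{[m]}$ gives $\Pr[B]=\Ex_{(z_1,\dots,z_{2m})}\Ex_J[\mathbf 1\{\sigma_J(Y,Y')\in B\}]$. Fix $(z_1,\dots,z_{2m})$ and bound the inner probability over $J$ by a union bound over the restrictions of $C$ to the set $\{z_1,\dots,z_{2m}\}$, of which there are at most $(2m+1)^d$ by Theorem~\ref{thm:Sauer}. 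For a fixed such restriction, represented by some $c'$, put $S=\{j\in[2m]:c(z_j)\neq c'(z_j)\}$; if $\sigma_J(Y,Y')\in B$ via $c'$ then no element of $S$ may land in the first block, so no pair $\{i,m+i\}$ has both ends in $S$, all of $S$ is forced into the second block (an event of probability exactly $2^{-|S|}$ over $J$), and $|S|\ge\eps m/2$. Hence the inner probability is at most $(2m+1)^d\,2^{-\eps m/2}$, uniformly in the $z_j$'s.

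Combining, $\Pr[A]\le2\Pr[B]\le2(2m+1)^d\,2^{-\eps m/2}\le2(2m+1)^d(1-\eps/4)^m<\delta$, the penultimate step being the elementary inequality $2^{-\eps/2}\le1-\eps/4$ for $\eps\in(0,1]$. The main obstacle is the first step: one has to set up the ghost-sample coupling and the second-moment estimate so that relaxing the disagreement threshold from $\eps m$ to $\eps m/2$ loses only a factor of $2$, and verify that the modest lower bound on $\eps m$ this requires really is implied by the hypothesis. Everything after that is a union bound plus the purely combinatorial observation about swaps, with Theorem~\ref{thm:Sauer} used as a black box.
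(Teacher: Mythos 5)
Your argument is correct and follows the same high-level double-sampling strategy as the paper (ghost sample, second-moment bound to pass from $A$ to $B$, symmetrization over the $2m$-tuple, union bound via Theorem~\ref{thm:Sauer}), but the symmetrization step is implemented differently. The paper conditions on the unordered multiset $S=Y\cup Y'$, views $Y$ as a uniform half of $S$, and asserts directly that the probability of agreement on $Y$ is at most $(1-\eps/4)^m$; this elides the exact hypergeometric computation. You instead use the coordinatewise-swap (Rademacher-type) symmetrization, which makes the conditional combinatorics completely explicit and yields the cleaner per-restriction bound $2^{-|S|}\le 2^{-\eps m/2}$, which you then relax to $(1-\eps/4)^m$ via $2^{-\eps/2}\le 1-\eps/4$ on $(0,1]$. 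You also replace the paper's general-purpose Lemma~\ref{lem:aux} with a direct measurable-selection-plus-Cantelli argument; the paper's Chebyshev estimate showing $\Pr(F'_h)\ge 1/2$ plays exactly the same role, and in both treatments the needed lower bound on $\eps m$ (you say ``a small absolute constant,'' the paper silently uses $\eps m\ge 8$) does follow from $2(2m+1)^d(1-\eps/4)^m<1$ once one discards the trivial cases $d=0$ and $m=0$, but neither you nor the paper spells out that verification. Net effect: same proof skeleton, a somewhat more self-contained and slightly sharper symmetrization step on your side.
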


%In other words, the VC-dimension provides an upper bound on the number of samples needed to PAC learn the class $C$. For more on PAC learning see 
%\cite{KearnsVazirani94}.
%The main problem is that this does not guarantees the existence of an efficient PAC learning algorithm, and indeed, there are classes that have small VC-dimension but they cannot efficiently be PAC learned. ***

\paragraph{{VC-dimension and the metric structure.}}
Another fundamental result in this area is Haussler's \cite{zbMATH00734534}
description of the metric structure of concept classes with low VC-dimension 
(see also the work of Dudley \cite{zbMATH03628097}).
Roughly, it says that a concept class $C$ of VC-dimension $d$, when thought of as an $L_1$
metric space, behaves like a $d$ dimensional space in the sense that the size of an $\eps$-separated set in $C$ is at most $(1/\eps)^d$.
%Let us start by introducing some notions.
More formally, every probability distribution $\mu$ on $X$ induces the (pseudo) metric
$$\dist_\mu(c,c') = \mu( \{x : c(x) \neq c'(x)\})$$
on $C$.
A set $S  \subseteq C$ is called $\eps$-separated with respect to $\mu$
if for every two concepts $c \neq c'$ in $S$ 
we have $\dist_\mu(c,c') > \eps$.
A set $A = A_\mu(C,\eps) \subseteq C$ is called 
an {\em $\eps$-approximating set}\footnote{In metric spaces such a set is called
an $\eps$-net, however in learning theory and combinatorial geometry the term $\eps$-net has a different meaning,
so we use $\eps$-approximating instead.} for $C$ with respect to $\mu$
if it is a maximal $\eps$-separated set with respect to $\mu$.
The maximality of $A$ implies that for every $c\in C$ there is some 
rounding $r = r(c,\mu,C,\eps)$ in $A$ 
so that $r$ is a good approximation to $c$,
that is, $\dist_\mu(c,r) \leq \eps$.
We call $r$ a rounding of $c$ in $A$.

An approximating set can be thought of as a metric approximation of
the possibly complicated concept class $C$,
and for many practical purposes it is a good enough substitute for $C$.
Haussler proved that there are always small approximating sets.

\begin{theorem}[Haussler]
\label{thm:Haussler}
Let $C \subseteq \{0,1\}^X$ be a concept class with VC-dimension $d$. 
Let $\mu$ be a distribution on $X$.
Let $\eps\in (0,1]$.
If $S$ is $\eps$-separated with respect to $\mu$ then
$$|S| \leq e (d+1) \left(\frac{2e}{\eps}\right)^d
\leq \left(\frac{4e^2}{\eps}\right)^d.$$
\end{theorem}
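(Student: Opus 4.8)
The plan is to prove this by a random-sampling argument (in the spirit of the double-sampling argument behind Theorem~\ref{thm:PAC}) combined with the Sauer-Shelah-Perles Lemma (Theorem~\ref{thm:Sauer}). Fix an $\eps$-separated set $S \subseteq C$ and put $N = |S|$. I would draw a multiset $\bar x = (x_1,\dots,x_m)$ of $m$ independent samples from $\mu$, with the length $m$ to be chosen, and analyze the restriction map $c \mapsto c|_{\bar x}$ on $S$. The two facts that make this work are: (i) since $\dist_\mu(c,c') > \eps$ for distinct $c,c' \in S$, the probability that $c$ and $c'$ have the same restriction to $\bar x$ is at most $(1-\eps)^m \le e^{-\eps m}$; and (ii) the image $S|_{\bar x}$ is a concept class on $m$ points of VC-dimension at most $d$, so by Theorem~\ref{thm:Sauer} it has at most $\sum_{i=0}^{d}\binom{m}{i} \le (em/d)^d$ elements.

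The naive execution is: by (i) and a union bound over the $\binom{N}{2}$ pairs, some sample $\bar x$ has at most $\binom{N}{2} e^{-\eps m}$ collisions of the restriction map; removing one concept from each colliding pair and applying (ii) gives $N \le (em/d)^d + \binom{N}{2} e^{-\eps m}$. Taking $m \approx \ln(N)/\eps$ makes the second term at most $N/2$, so $N \le 2\big(e\ln(N)/(\eps d)\big)^d$, which already shows $N \le (1/\eps)^{O(d)}$; substituting this crude bound back into the same inequality improves the base to $O(\log(1/\eps))/\eps$.

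The main obstacle is this leftover $\log(1/\eps)$ factor. It arises because forcing the restriction map to be (essentially) injective costs a sample of length $\Omega((\log N)/\eps)$, and that is already too long to meet the bound $e(d+1)(2e/\eps)^d$. To get the sharp constant one must use a sample of length only $m = O(d/\eps)$, on which the restriction map is far from injective, and replace the wasteful union bound in (i) by Haussler's one-inclusion-graph density lemma: on any VC-dimension-$d$ class over finitely many points, the graph joining concepts at Hamming distance $1$ has at most $d$ times as many edges as vertices. The intended mechanism is to reveal the coordinates of $\bar x$ one at a time, track how the number of distinct restrictions grows, use $\eps$-separatedness to force many restrictions to split whenever a fresh coordinate is revealed, and use the density lemma to bound how quickly distinct restrictions can accumulate — then balance this growth against the Sauer-Shelah ceiling $\sum_{i=0}^{d}\binom{m}{i}$ with $m$ optimized around $2d/\eps$. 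Making this quantitatively tight, so that the leading factors come out as $e(d+1)$ and $(2e/\eps)^d$, is the delicate step and is exactly the content of Haussler's packing estimate.
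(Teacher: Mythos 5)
Your first paragraph is, up to small variations, exactly what the paper itself does: the paper does not prove the theorem as stated but explicitly gives ``a proof of a weaker statement,'' which is the random-sampling argument you describe --- draw $m \approx (\log|S|)/\eps$ i.i.d.\ points from $\mu$, use $\eps$-separatedness plus a union bound to make the restriction map injective with positive probability, then invoke the Sauer-Shelah-Perles Lemma on the image, yielding a bound of the form $\bigl(O(d\log(1/\eps))/\eps\bigr)^d$. The paper bounds the probability that \emph{no} pair collides rather than the expected number of collisions, but this is an inessential difference; both land on the same weaker bound with the extra $\log(1/\eps)$ factor. Your diagnosis of why that factor appears, and your identification of Haussler's one-inclusion-graph edge-density lemma (the statement that a VC-dimension-$d$ class over a finite domain has Hamming-distance-$1$ edge count at most $d$ times the vertex count) as the ingredient needed to bring the base down from $O(\log(1/\eps))/\eps$ to $O(1)/\eps$, is correct --- that is precisely what drives Haussler's original proof. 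You explicitly stop short of executing that delicate step; note that the paper does too, citing Haussler for the sharp constant and only ever using bounds of the form $(4e^2/\eps)^d$ downstream, so for the purposes of this paper your proposal is a faithful reproduction of what is actually proved plus a correct roadmap for what is cited.
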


\begin{proof}[A proof of a weaker statement]
%\footnote{There is no attempt to optimize constants.}.]
For  {$m =  2 \log(|S|)/\eps$},  let $x_1,\ldots,x_m$ be
independent samples from $\mu$.
For every $c \neq c'$ in $S$,
$$\Pr_{\mu^m} \left( \forall i \in [m] \ \ c(x_i) = c'(x_i) \right)
< (1-\eps)^m \leq e^{-m \eps} \leq 1/|S|^2.$$
The union bound implies that
there is a choice of $Y \subseteq X$ of size
$|Y| \leq m$ so that $|S|_Y| = |S|$.
Theorem~\ref{thm:Sauer} implies $|S| \leq (|Y|+1)^d$. 
Thus, $|S| < \left( 30 d \log(2 d/\eps) / \eps \right)^d$.
\end{proof}

\subsection{Teaching}
 
Imagine a teacher that helps a student to learn a concept $c$ by picking insightful examples. 
The concept $c$ is known only to the teacher, but $c$ belongs to a class of concepts $C$ known to both the teacher and the student. 
The teacher carefully chooses a set of examples that is tailored for $c$, and then provides these examples to the student. Now, the student should be able to recover $c$ from these examples. 

A central issue that is addressed in the  
design of mathematical teaching  models is ``collusions.''
Roughly speaking, a collusion occurs when the teacher and the student
agree in advance on some unnatural encoding of information about $c$ using the 
bit description of the chosen examples, instead of using attributes that separate $c$ from other concepts.
%For example, interpreting the binary representation of a point $x$
%as meaningful information about $c$.
%(e.g. the teacher and the student agree in 
%advance on some encoding of the concepts and
%the teacher uses the examples to communicate this encoding). 
Many mathematical models for teaching were suggested: 
Shinohara and Miyano~\cite{ShinoharaM90}, Jackson and Tomkins~\cite{JacksonT92}, Goldman, Rivest and Schapire~\cite{GoldmanRS93},
Goldman and Kearns~\cite{GoldmanK95}, Goldman and Mathias~\cite{GoldmanM96} Angluin and Krikis~\cite{AngluinK03}, Balbach~\cite{Balbach2007}, and Kobayashi and Shinohara~\cite {KobayashiS09}.
We  now discuss some of these models in more detail.
%$$c'\neq c \implies \exists y\in Y, c(y)\neq c'(y).$$}

\paragraph{{Teaching sets.}}
The first mathematical models for teaching~\cite{GoldmanK95,ShinoharaM90,AnthonyBCS92}
handle collusions in a fairly restrictive way,
by requiring that the teacher provides a set of examples $Y$ 
that uniquely identifies $c$.
% such that every consistent student successfully learns the concept $c$.
Formally, this is captured by the notion of a teaching set, which was independently  
introduced by Goldman and Kearns~\cite{GoldmanK95},
Shinohara and Miyano~\cite{ShinoharaM90} and Anthony et al.~\cite{AnthonyBCS92}.
A set $Y\subseteq X$ is a teaching set for $c$ in $C$ if for all $c' \neq c$ in $C$, we have $c'|_Y \neq c|_Y$.
The teaching complexity in these
models is captured by the hardest concept to teach, i.e., $\max_{c \in C}\min\{ |Y| \,:\, Y \text{ is a teaching set for } c \text{ in } C \}$.

Teaching sets also appear in other areas of learning theory: Hanneke~\cite{Hanneke07} used it in his study of the label complexity in active learning, and the authors of~\cite{WigdersonY12} used variants of it to design efficient algorithms for learning distributions
using imperfect data.

Defining the teaching complexity using the hardest concept
%This requirement 
is often too restrictive.
Consider for example the concept class consisting of all singletons and the empty set over a domain $X$ of size $n$. Its teaching complexity in these models is $n$, since the only teaching set for the empty set is $X$. 
This is a fairly simple concept class that has the maximum possible complexity.

%
%
%This is somewhat counter-intuitive since the student will not hypothesize the empty set even after seeing $n-1$ examples that are labeled $0$.
 
\paragraph{{Recursive teaching dimension.}}
Goldman and Mathias~\cite{GoldmanM96} and Angluin and Krikis~\cite{AngluinK03}
therefore suggested less restrictive teaching models,
and more efficient teaching schemes were indeed discovered in these models.
One approach, studied by Zilles et al.~\cite{zbMATH06253884}, Doliwa et al.~\cite{DoliwaSZ10}, and Samei et al.~\cite{SameiSYZ14}, uses a natural hierarchy
on the concept class $C$ which is defined as follows. The first layer in the hierarchy consists of all concepts whose teaching set has minimal size. Then, these concepts are removed and the second layer consists of all concepts whose teaching set with respect to the remaining concepts has minimal size. Then, these concepts are removed and so on, until all concepts are removed. The maximum size of a %teaching 
set that is chosen in this process is called the {\it recursive teaching (RT)} dimension. 
{One way of thinking about this model is that the teaching
process satisfies an Occam's razor-type rule 
of preferring simpler concepts.}
%consequently, the teacher needs only  to specify the concept being taught with respect to simpler concepts.}
For example, the concept class consisting of singletons and the empty set, which was considered earlier, has recursive teaching dimension $1$: The first layer in the hierarchy consists
of all singletons, which have teaching sets of size $1$.
Once all singletons are removed, we are left with a concept class
of size $1$, the concept class $\{\emptyset\}$,
and in it the empty set has a teaching set of size $0$.

A similar notion to RT-dimension was independently suggested in~\cite{WigdersonY12} under the terminology of partial IDs. There the focus was on getting a simultaneous upper bound on the size of the sets, as well as the number of layers in the recursion, and it was shown that for any concept class $C$ both can be made at most $\log|C|$. Motivation for this study comes from the population recovery learning problem defined in~\cite{DRWY12}.
%\new{Shay: I moved this paragraph to here. Earlier it was the first paragraph under Previous results.}

\paragraph{{Previous results.}}

% (the first layer consists of all singletons since all of them have a teaching set of size $1$, and the second layer consists of the empty set and the corresponding teaching set is empty).
%A similar structure was used, under a different terminology, by~\cite{WigdersonY12} to design efficient algorithms for learning using imperfect samples.

%In~\cite{SameiSYZ14} it is shown that, like the VC-dimension, also the recursive teaching dimension satisfies an analogue of the Sauer-Shelah-Perles bound: If the recursive teaching dimension of $C\subseteq\{0,1\}^n$ is $d$, then $|C|\lesssim n^d$.

Doliwa et al.~\cite{DoliwaSZ10} and Zilles et al.~\cite{zbMATH06253884} asked whether small VC-dimension implies small recursive teaching dimension. An equivalent question was asked 10 years earlier by Kuhlmann~\cite{Kuhlmann99}.  Since the VC-dimension does not increase when concepts are removed from the class, this question is equivalent to asking whether every class with small VC-dimension has some concept in it with a small teaching set. Given the semantics of the recursive teaching dimension and the VC-dimension, an interpretation of this question
is whether  exact teaching is not much harder than approximate learning (i.e., PAC learning).

For infinite classes the answer to this question is negative. There is an infinite concept class with VC-dimension $1$ so that every concept in it does not have a finite teaching set. An example for such a class is $C\subseteq\{0,1\}^\mathbb{Q}$ defined as $C=\{c_q:q\in\mathbb{Q}\}$ where $c_q$ is the indicator function of all rational numbers that are smaller than $q$. The VC-dimension of $C$ is $1$, but every teaching set for some $c_q \in C$ must contain a sequence of rationals that converges to $q$. 
%\begin{question}
%Do there exists a function $f:\mathbb{N}\rightarrow\mathbb{N}$
%such that every finite concept class with VC-dimension $d$ has
%a concept with a teaching set of size at most $f(d)$?
%\end{question}

For finite classes this question is open.
However, in some special cases it is known that the answer is affirmative.
In~\cite{Kuhlmann99} it is shown that if $C$ has VC-dimension $1$, then its recursive teaching dimension is also $1$. It is known that if $C$ is a maximum\footnote{That is, $C$ satisfies Sauer-Shelah-Perles Lemma with equality.} class then its recursive teaching dimension is equal to its VC-dimension~\cite{DoliwaSZ10,DBLP:journals/jmlr/RubinsteinR12}. Other families of concept classes for which the recursive teaching dimension is at most the VC-dimension are discussed in~\cite{DoliwaSZ10}.
In the other direction, \cite{Kuhlmann99} provided examples of concept classes with VC-dimension $d$ and recursive teaching dimension at least $\frac{3}{2}d$.

The only bound on the recursive teaching dimension for general classes 
was observed by both~\cite{DoliwaSZ10,WigdersonY12}. It states that the recursive teaching dimension of $C$ is at most $\log|C|$. This bound follows from a simple halving argument which shows that for all $C$ there exists some $c\in C$ with a teaching set of size $\log|C|$. 

\paragraph{{Our contribution.}}
Our first main result is the following general bound, which exponentially improves over the $\log|C|$ bound when the VC-dimension is small
(the  proof is given in Section~\ref{sec:RTD}).

\begin{theorem}[RT-dimension]
\label{thm:RTD}
Let $C$ be a concept class of
VC-dimension $d$. Then there exists $c\in C$
with a teaching set of size at most $$d2^{d+3}(\log(4e^2) + \log\log|C|) .$$
\end{theorem}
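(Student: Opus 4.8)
The plan is to use Haussler's Packing Lemma (Theorem~\ref{thm:Haussler}) on the \emph{dual} concept class to find a concept with a small teaching set. Recall that the dual class $C^*$ lives on the universe $C$ (its points are the concepts of $C$), and its concepts are indexed by the points $x\in X$, where the concept $x^*\in C^*$ evaluates to $x^*(c)=c(x)$. A standard fact is that $\VC(C^*) < 2^{\VC(C)+1}$, so if $\VC(C)=d$ then $\VC(C^*)\le 2^{d+1}-1=:d^*$. The key observation linking teaching sets to the dual is this: a set $Y\subseteq X$ is a teaching set for $c$ in $C$ exactly when the dual concepts $\{y^*:y\in Y\}$ separate the dual point $c$ from every other dual point $c'\in C$; equivalently, $Y$ is a teaching set for $c$ iff no $c'\neq c$ in $C$ agrees with $c$ on all of $Y$.

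First I would fix the uniform distribution $\nu$ on the dual universe $C$ and apply Haussler's theorem to $C^*$ with an appropriate parameter $\eps$. Let $A\subseteq C^*$ be an $\eps$-approximating set for $C^*$ with respect to $\nu$; by Theorem~\ref{thm:Haussler} we have $|A|\le (4e^2/\eps)^{d^*}$. The elements of $A$ correspond to a set of coordinates $X_A\subseteq X$ with $|X_A|\le (4e^2/\eps)^{d^*}$. Now consider the restricted class $C|_{X_A}$. Two concepts $c,c'\in C$ that agree on all of $X_A$ are \emph{not} separated by any coordinate in $A$; but by the approximating (maximality) property, every dual concept $x^*$ for $x\in X$ is within distance $\eps$ (in $\nu$) of some element of $A$, which means every coordinate $x$ disagrees with some coordinate in $X_A$ on at most an $\eps$-fraction of the pairs $(c,c')$... the precise counting I need is: if $c,c'$ agree on $X_A$, then the fraction of $x\in X$ on which they disagree is small, so the ``cell'' of $C|_{X_A}$ containing $c$ is a class on which only few coordinates are informative.

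The main step is then a recursion. After restricting to $X_A$, I have reduced the teaching problem to the same problem on a class whose ``effective size'' has dropped: by the approximating property, each equivalence class of $C$ under restriction to $X_A$ has the property that the concepts in it are $\eps$-close in the dual metric, and a Sauer–Shelah count on $C|_{X_A}$ together with the bound on $|X_A|$ shows the number of such classes is at most $|X_A|^d \le (4e^2/\eps)^{d^*d}$. Choosing $\eps$ a constant (say $\eps=1/2$) makes this bound a constant depending only on $d$, call it $N=N(d)=(8e^2)^{d^*}$. So in one round of adding $N$ examples, we pass from $C$ to a subclass that is the intersection of $C$ with a single cell, and crucially this subclass has size at most... here one must be careful: the right inductive quantity is not $|C|$ but something like $\log|C|$, because each round must shrink the log of the size. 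The correct claim is that restricting to $X_A$ partitions $C$ into at most $N$ parts, and we recurse into the part containing our eventual target; after $t$ rounds the class has size at most $|C|/N^{?}$ — no, rather, after $t$ rounds, by iterating, we have used $tN$ examples and the surviving subclass has size at most... the clean version: iterate $\log\log|C|$ times, each time adding $O(d2^d)$ examples and reducing $\log$ of the size by a constant factor, until the class is small enough to teach directly with $O(d2^d)$ more examples. Summing the per-round cost $O(d2^d)$ over $O(\log\log|C|)$ rounds, and tracking the constants ($d^*=2^{d+1}$ gives the $d2^{d+3}$ and the $\log(4e^2)$ term), yields the stated bound.

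The hard part will be setting up the recursion so that the relevant size parameter genuinely contracts at each step. The subtlety is that Haussler's lemma applied to the dual with constant $\eps$ bounds the number of \emph{coordinate types}, hence (via Sauer–Shelah on the restricted class) the number of cells, by a constant $N(d)$; but to teach, one needs the target concept to eventually be the unique survivor, so one must argue that choosing the largest cell and recursing — or more precisely, tracking how $\log|C|$ evolves — gives $\log|C_{t+1}| \le \log|C_t| - 1$ or a similar decrement after paying $O(d2^d)$ examples, which requires combining the cell count with a pigeonhole/averaging step to ensure we can always descend into a cell whose size is a bounded fraction of the current class. Once that contraction is established, the arithmetic is routine and the $\log\log|C|$ factor falls out of iterating the contraction; matching the explicit constant $d2^{d+3}(\log(4e^2)+\log\log|C|)$ is then just bookkeeping with $d^*\le 2^{d+1}$.
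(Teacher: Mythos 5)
Your high-level toolbox (Haussler's packing lemma on the dual, Sauer--Shelah, recursion) matches the paper's, but the recursive step you set up has a genuine gap that you partly flag yourself, and the fix is where the real content of the proof lies. With a fixed constant $\eps$, descending into the smallest nonempty cell of the partition of $C$ induced by $X_A$ gives, by pigeonhole, a cell of size at most $|C|/2$ (there may be as few as two nonempty cells), so you only get an additive drop in $\log|C|$ and hence $\Theta(\log|C|)$ rounds rather than $\Theta(\log\log|C|)$; there is no averaging step that fixes this. The paper instead takes $\eps$ to depend on $|C|$, namely $\eps = |C|^{-1/(d2^{d+2})}$, and runs the packing argument by contradiction: if all columns of $C$ were pairwise $\eps$-separated in the uniform metric on $C$, then Haussler applied to $C^*$ (whose VC-dimension is at most $2^{d+1}$ by Assouad) combined with Sauer--Shelah applied to $C$ would force $|C| < |C|$. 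So some pair of distinct columns $c_x,c_{x'}$ is within distance $\eps$; the set $\{c : c(x)\neq c(x')\}$ then has size at most $\eps|C| = |C|^{1 - 1/(d2^{d+2})}$ and is nonempty, and one of its two halves $\{c : c(x)=0,\ c(x')=1\}$ or $\{c : c(x)=1,\ c(x')=0\}$ is the class one recurses into after adding just $x$ and $x'$ to the teaching set. This gives the genuine multiplicative contraction of $\log|C|$.

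This also pinpoints a second problem with your accounting: you state the per-round cost is $O(d2^d)$, but with $\eps = 1/2$ you have $|X_A| \leq (8e^2)^{2^{d+1}}$, which is doubly exponential in $d$, so even granting a $\log\log|C|$ round count the total would far exceed the stated bound. The paper's device of adding only two coordinates per iteration (the close pair, not the full approximating set) is what makes the arithmetic close: at most $d2^{d+2}\log\log|C|$ iterations at two coordinates each, followed by a final halving-argument phase of at most $d2^{d+2}\log(4e^2)$ coordinates once $|C|$ drops below $(4e^2)^{d2^{d+2}}$, yielding the claimed $d2^{d+3}(\log(4e^2)+\log\log|C|)$.
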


It follows that the recursive teaching dimension of concept classes
of VC-dimension $d$ is at most 
$d2^{d+3}(\log(4e^2) + \log\log|C|)$
as well.

{Subsequent to this paper, Chen, Cheng, and Tang~\cite{DBLP:journals/eccc/ChenCT16} proved that the RT-dimension is at most $\exp(d)$. Their proof is based on ideas from this work, in particular they follow and improve the argument from the proof of Lemma~\ref{lem:3,6}.}

\subsection{{Sample} compression schemes}
\label{sec:intCompSch}

A fundamental and well known statement in learning theory says that if the VC-dimension of a concept class $C$ is small, then any consistent\footnote{An algorithm that outputs an hypothesis in $C$ that is consistent with the input examples.} algorithm successfully PAC learns concepts from $C$ after seeing just a few labelled examples~\cite{zbMATH03391742,BlumerEHW87}.
% that are labeled according to $c$ 
In practice, however, a major challenge one has to face when designing a learning algorithm is the construction of an hypothesis that is consistent with the examples seen.
Many learning algorithms share the property that the output hypothesis
is constructed using a small subset of the examples. For example, in support vector machines, only the set of support vectors is needed to construct the separating hyperplane \cite{Cristianini00a}.
{Sample compression schemes provide a formal 
meaning for this algorithmic property.}

Before giving the formal definition of compression schemes, let us consider
a simple illustrative example.
Assume we are interested in learning the concept class of intervals
on the real line.  We get a collection of 100 samples of the form
$(x,c_I(x))$ where $x \in \R$ and $c_I(x) \in \{0,1\}$ indicates\footnote{That is $c_I(x)=1$ iff $x \in I$.} if $x$ is
in the interval $I \subset \R$.
Can we remember just a few of the samples
in a way that allows to recover all the 100 samples?  
In this case, the answer is affirmative and in fact it is easy to do so.
Just remember two locations, those of the left most $1$ and of the right most $1$
(if there are no $1$s, just remember one of the $0$s).
From this data, we can reconstruct the value of $c_I$ on all the other 
100 samples.

%The above example brings several questions to mind.
%What is special about the family of intervals that allows
%this type of compression? 
%Is it related to the geometry or topology
%of the underlying universe?
%How can we abstractly define the above procedure?
%\new{AS: I removed the paragraph that was here before}

\paragraph{The formal definition.}
Littlestone and Warmuth~\cite{littleWarm} formally defined
sample compression schemes as follows.
%The following basic question, asked by Littlestone and Warmuth \cite{littleWarm} nearly 30 years ago, is still open:
%Does a concept class of VC-dimension $d$ have a compression scheme of size depending only on $d$ (and not on the universe size)?
%We now provide the formal definition of sample compression schemes.
%Define the set of $C$ labelled samples of sizes between $k_1$ and $k_2$ as
Let $C \subseteq \{0,1\}^X$ with $|X|=n$.
Let
$$L_C(k_1,k_2) = \{(Y,y) : Y \subseteq X , \ k_1 \leq |Y| \leq k_2,  \ y \in C|_Y\},$$
the set of labelled samples from $C$, of sizes between $k_1$ and $k_2$.
A $k$-sample compression scheme for $C$ with information $Q$, consists 
of two maps $\kappa,\rho$ for which the following hold:

\begin{description}

\item[(${\kappa}$)] 
The {\em compression map}
$$\kappa: L_C(1,n) \to L_C(0,k) \times Q$$ takes
$(Y,y)$ to $((Z,z),q)$ with $Z \subseteq Y$
and $y|_Z = z$.

\item[($\rho$)] The {\em reconstruction map}
$$\rho : L_C(0,k) \times Q \to \{0,1\}^X$$
is so that for all $(Y,y)$ in $L_C(1,n)$,
$$\rho(\kappa(Y,y))|_Y = y.$$
%The scheme is called proper if the image of $\rho$ is contained in $C$.
{The {\em size} of the scheme is  $k+\log|Q|$.}
\end{description}
Intuitively, the compression map takes a long list of samples
$(Y,y)$
and encodes it as a short sub-list of samples $(Z,z)$
together with some small amount of side information $q\in Q$,
which helps in the reconstruction phase.
The reconstruction takes a short list of samples $(Z,z)$ and decodes it
using the side information $q$, {without any knowledge
of $(Y,y)$},
to an hypothesis in a way that essentially inverts the compression.
{Specifically, the following property must always hold:} if the compression of $(Y,c|_Y)$ is the same as that of $(Y',c'|_{Y'})$ then $c|_{Y\cap Y'} = c'|_{Y\cap Y'}$.

A different perspective of the
side information is as a list decoding
in which the small set of labelled examples $(Z,z)$
is mapped to the set of hypothesis $\{ \rho((Z,z),q) : q \in Q\}$,
one of which is correct.

We note that it is not necessarily the case that the reconstructed hypothesis belongs to the original class $C$. All it has to satisfy is that for any $(Y,y)\in L_C(1,n)$ such that $h=\rho(\kappa(Y,y))$ we have that $h|_Y = y$. Thus, $h$ has to be consistent only on the sampled coordinates that were compressed and not elsewhere. 

{Let us consider a simple example of a sample compression scheme, 
to help digest the 
%somewhat complicated 
definition.
Let $C$ be a concept class and
let $r$ be the rank over, say, $\R$
of the matrix whose rows correspond to the concepts in $C$. 
We claim that there is an $r$-sample compression
scheme for $C$ with no side information.
Indeed, for any $Y \subseteq X$,
let $Z_Y$ be a set of at most $r$ columns
that span the columns of the matrix $C|_Y$.
%there is a subset $Z\subseteq Y$ of size at most $r$ with the property
%that any two rows (concepts) which agree on $Z$ must agree on $Y$
%(just take $Z$ to be a basis to the subspace spanned by $Y$).
Given a sample $(Y,y)$ compress it to
$\kappa(Y,y) = (Z_Y,z)$ for $z = y|_{Z_Y}$. 
The reconstruction maps $\rho$ takes $(Z,z)$
to any concept $h \in C$ so that $h|_Z = z$.
This sample compression scheme works since
if $(Z,z) =\kappa(Y,y)$
then every two different rows in $C|_Y$ must 
disagree on $Z$.
} 
%Note that in this compression scheme
%$Z$ depends only on $Y$ (and not on $y$).

\paragraph{Connections to learning.}
{Sample compression schemes are known to yield practical learning algorithms 
(see e.g.\ \cite{DBLP:journals/jmlr/MarchandS02}),
and allow learning for multi labelled concept classes~\cite{DBLP:conf/colt/SameiSYZ14}.

They} can also be interpreted
as a formal manifestation of Occam's razor.
Occam's razor is a philosophical principle attributed to William of Ockham from the late middle ages.
It says that in the quest for an explanation or an hypothesis, one should prefer the simplest one which is consistent with the data. There 
are many works on the role of Occam's razor in
learning theory, a partial list includes  \cite{littleWarm,BlumerEHW87,DBLP:conf/colt/Floyd89,DBLP:journals/iandc/QuinlanR89,DBLP:journals/jcss/HelmboldW95,DBLP:journals/ml/FloydW95,DBLP:journals/datamine/Domingos99}.
In the context of sample compression schemes, 
simplicity is captured by the size of the compression scheme.
%and consistency by definition of the compression map.
%, and the {\it consistency} with the data is captured by the constraint that the reconstructed hypothesis need to be consistent with the original sample. 
Interestingly, this manifestation of Occam's razor is provably useful \cite{littleWarm}: 
Sample compression schemes imply {PAC} learnability.

\begin{theorem}[Littlestone-Warmuth]
\label{thm:LWPAC}
Let $C \subseteq \{0,1\}^Xß$, and $c \in C$.
Let $\mu$ be a distribution on $X$, and
 $x_1,\ldots,x_m$ be $m$ independent samples from $\mu$.
Let $Y = (x_1,\ldots,x_m)$ and $y = c|_Y$.
Let $\kappa,\rho$ be a $k$-sample compression scheme
for $C$ with additional information $Q$.
Let $h = \rho(\kappa(Y,y))$.
Then,
$$\Pr_{\mu^m}( \dist_\mu(h,c) > \eps)
<|Q| \sum_{j=0}^{k} {m \choose j} (1-\eps)^{m-j}.$$
\end{theorem}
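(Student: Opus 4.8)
The plan is to apply a union bound over all possible ``shapes'' that the output of the compression map can take. Define a shape to be a pair $(I,q)$, where $I\subseteq[m]$ is the set of indices of the retained samples (so $|I|\le k$, since $\kappa$ lands in $L_C(0,k)\times Q$) and $q\in Q$ is the side information. The first and crucial step is a structural observation: because the input labeling is always $y=c|_Y$, the pair $(Z,z)$ output by $\kappa$ satisfies $Z=\{x_i:i\in I\}$ and $z=c|_Z$; consequently the reconstructed hypothesis $h=\rho((Z,z),q)$ is a deterministic function $h_{I,q}$ of $I$, of the values $(x_i)_{i\in I}$, and of $q$ only — it does not depend on the remaining samples $(x_j)_{j\notin I}$.

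Next I would use the correctness guarantee of the scheme, namely $\rho(\kappa(Y,y))|_Y=y=c|_Y$. If the actual shape produced on input $(Y,c|_Y)$ is $(I,q)$, then $h=h_{I,q}$ agrees with $c$ on every $x_j$ with $j\in[m]$, and in particular on the unused samples, those with $j\notin I$. Hence the bad event $\{\dist_\mu(h,c)>\eps\}$ is contained in
$$\bigcup_{\substack{I\subseteq[m]\\|I|\le k}}\ \bigcup_{q\in Q}\ A_{I,q},\qquad A_{I,q}:=\bigl\{\dist_\mu(h_{I,q},c)>\eps\bigr\}\cap\bigl\{h_{I,q}(x_j)=c(x_j)\ \forall j\notin I\bigr\}.$$

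To bound $\Pr_{\mu^m}(A_{I,q})$ for a fixed shape, condition on the values $(x_i)_{i\in I}$. This pins down $h_{I,q}$, while $(x_j)_{j\notin I}$ remain $m-|I|$ independent draws from $\mu$. On the event $\dist_\mu(h_{I,q},c)>\eps$ we have $\Pr_{x\sim\mu}\bigl(h_{I,q}(x)=c(x)\bigr)=1-\dist_\mu(h_{I,q},c)<1-\eps$, so the conditional probability that all $m-|I|$ unused samples agree with $c$ is less than $(1-\eps)^{m-|I|}$. Taking expectation over $(x_i)_{i\in I}$ gives $\Pr_{\mu^m}(A_{I,q})<(1-\eps)^{m-|I|}$. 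Summing over the ${m\choose j}$ index sets $I$ of each size $j\in\{0,1,\dots,k\}$ and over the $|Q|$ values of $q$ then yields the claimed bound $|Q|\sum_{j=0}^{k}{m\choose j}(1-\eps)^{m-j}$.

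The only delicate point — more bookkeeping than genuine difficulty — is the structural observation in the first paragraph: one must be careful about the distinction between the ordered tuple $Y=(x_1,\dots,x_m)$ and the (multi)set $Z$, about possible repetitions among the samples, and about the fact that $\kappa$ is only required to behave on labeled samples that actually come from $C$. Once one fixes the convention that a retained sample is recorded together with its index, the map $h_{I,q}$ is well defined, and the independence of $(x_j)_{j\notin I}$ from the conditioning used in the last step is immediate from the independence of the $x_i$'s.
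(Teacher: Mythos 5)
Your argument is correct and is essentially the same as the paper's: both decompose the bad event over the at most $\sum_{j\le k}\binom{m}{j}\cdot|Q|$ possible ``shapes'' $(I,q)$, note that the reconstructed hypothesis $h_{I,q}$ depends only on the retained samples $(x_i)_{i\in I}$ and $q$, use the correctness guarantee to see that $h$ must agree with $c$ on the unretained samples, and bound each term by $(1-\eps)^{m-|I|}$ via independence. The extra care you take with multisets versus index sets makes explicit a bookkeeping point the paper's proof sketch leaves implicit, but it does not change the underlying argument.
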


\begin{proof}[Proof sketch.]
There are $\sum_{j=0}^{k} {m \choose j}$
subsets $T$ of $[m]$ of size at most $k$.
There are $|Q|$ choices for $q \in Q$.
Each choice of $T,q$ yields a function $h_{T,q}
= \rho((T,y_T),q)$ that is measurable with respect to $x_T = (x_t : t \in T)$.
The function $h$ is one of the functions in $\{h_{T,q} : |T|\leq k,q\in Q\}$.
For each $h_{T,q}$, the coordinates in $[m] - T$
are independent,
and so if $\dist_\mu(h_{T,q},c) > \eps$
then the probability that all these $m-|T|$ samples
agree with $c$ is less than $(1-\eps)^{m-|T|}$.
The union bound completes the proof.
\end{proof}

%What about lower bounds on the size of compression schemes? 
%As mentioned above, a sample compression scheme implies
%a PAC learning algorithm with sample complexity of a similar
%size (see Theorem~\ref{thm:LWPAC} below for more details).
{
The sample complexity of PAC learning is essentially the VC-dimension.
Thus, from Theorem~\ref{thm:LWPAC}
we expect the VC-dimension to bound from below the size of sample compression schemes.
Indeed, \cite{DBLP:journals/ml/FloydW95} proved
that there are concept classes of VC-dimension $d$ 
for which any sample compression scheme has size at least $d$.}

%
%a lower bound on the size of sample compression schemes
%in terms of VC-dimension should hold.
%Indeed, \cite{DBLP:journals/ml/FloydW95} proved
%that there are concept classes of VC-dimension $d$ 
%for which any sample compression scheme has size at least $d$.
%that require sample compression schemes of size at least $d$.

{
This is part of the motivation for the following basic question that was
asked by Littlestone and Warmuth \cite{littleWarm} nearly 30 years ago:
Does a concept class of VC-dimension $d$ have a sample
compression scheme of size depending only on $d$ (and not on the universe size)?}

In fact, unlike the VC-dimension, the definition of sample compression schemes
as well as the fact that they imply PAC learnability naturally generalizes to multi-class classification problems~\cite{DBLP:conf/colt/SameiSYZ14}. 
{Thus, Littlestone and Warmuth's question above 
can be seen as the boolean instance of a much broader question:
Is it true that the size of an optimal sample compression scheme for a given concept class (not necessarily binary-labeled) is the sample complexity of PAC learning of this class?
}

\paragraph{{Previous constructions.}}
%Other, more sophisticated compression schemes are known.
{
Floyd~\cite{DBLP:conf/colt/Floyd89}
and
Floyd and Warmuth~\cite{DBLP:journals/ml/FloydW95} 
constructed 
%\footnote{$\log|C|$ is better than the rank, $r$, since $|C|\leq 2^r$.}
sample compression schemes of size $\log |C|$.
The construction in \cite{DBLP:journals/ml/FloydW95} uses a transformation that converts
certain %mistake-bounded 
online learning algorithms to compression schemes.}
Helmbold and Warmuth~\cite{DBLP:journals/jcss/HelmboldW95} 
and Freund~\cite{DBLP:journals/iandc/Freund95} showed how to compress a sample of size $m$ to a sample of size $O(\log(m))$ using some side information for classes of constant VC-dimension
(the implicit constant in the $O(\cdot)$ depends on the VC-dimension).

{In a long line of works, several interesting compression schemes
for special cases were constructed. A partial list includes
Helmbold et al.\ \cite{DBLP:journals/siamcomp/HelmboldSW92},
Floyd and Warmuth \cite{DBLP:journals/ml/FloydW95},
Ben-David and Litman \cite{DBLP:journals/dam/Ben-DavidL98},
Chernikov and Simon \cite{chernikovS},
Kuzmin and Warmuth \cite{DBLP:journals/jmlr/KuzminW07},
Rubinstein et al.\ \cite{DBLP:journals/jcss/RubinsteinBR09}, Rubinstein and Rubinstein \cite{DBLP:journals/jmlr/RubinsteinR12},
Livni and Simon \cite{DBLP:conf/colt/LivniS13}
and more.
These works provided connections
between compression schemes and geometry, topology and model theory.
}

\paragraph{{Our contribution.}}
Here we make the first quantitive progress on this question, %for general finite classes
since the work of Floyd~\cite{DBLP:conf/colt/Floyd89}.
The following theorem shows that low VC-dimension implies
the existence of relatively efficient compression schemes.
The constructive proof is provided in Section~\ref{sec:compSch}.

%Prior to our work, only two constructions were known for general concept classes:
%Floyd and Warmuth \cite{DBLP:conf/colt/Floyd89,DBLP:journals/ml/FloydW95} 
%gave compression schemes of size $\log |C|$; 
%Helmbold and Warmuth~\cite{DBLP:journals/jcss/HelmboldW95} 
%\new{and Freund~\cite{DBLP:journals/iandc/Freund95}} showed how to compress a sample of size $m$ to a sample of size $O(\log(m))$ using some side information for classes of constant VC-dimension
%(the implicit constant in the $O(\cdot)$ depends on the VC-dimension).
%Here we make the first quantitive progress on the general question.
%The following theorem shows 
%that low VC-dimension implies
%the existence of relatively efficient compression schemes.
%The constructive proof is provided in Section~\ref{sec:compSch}.

\begin{theorem}[Sample compression scheme]
\label{thm:CompSch}
If $C$ has VC-dimension $d$ then it has
a %proper 
$k$-sample compression scheme with additional information $Q$
where
$k =O(d 2^d  \log\log|C|)$ and
$\log|Q| \leq O(k \log(k) )$.
\end{theorem}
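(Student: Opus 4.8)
The plan is to run the recursive Haussler-packing-on-the-dual argument behind Theorem~\ref{thm:RTD} on the \emph{restricted} class $C|_Y$, and to let the side information $Q$ record the choices that argument makes --- since, unlike in the teaching setting, here the concept to be identified is prescribed. So fix a sample $(Y,y)\in L_C(1,n)$ with $y=c|_Y$ for some $c\in C$, and consider $C|_Y\subseteq\{0,1\}^Y$: it has $\VC(C|_Y)\le d$, by Theorem~\ref{thm:Sauer} $|C|_Y|\le\min(|C|,|Y|^{d})\le|C|$, and its dual class has VC-dimension at most $2^{d+1}-1$. Running on $C|_Y$ the construction used to prove Theorem~\ref{thm:RTD} --- the iterated application of Haussler's packing lemma (Theorem~\ref{thm:Haussler}) to the dual, cf.\ Lemma~\ref{lem:3,6} --- produces, over $L=O(\log\log|C|)$ rounds, blocks $B_1,\dots,B_L\subseteq Y$ with $|B_i|=O(d2^{d})$ and a decreasing chain of ``live'' subclasses $C|_Y=D_0\supseteq D_1\supseteq\cdots\supseteq D_L$, all containing $y$, with $D_i$ obtained from $D_{i-1}$ by the $i$-th step of the recursion applied to $B_i$; by design $Z:=B_1\cup\cdots\cup B_L$ has $|Z|=O(d2^{d}\log\log|C|)=:k$. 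In the teaching setting one is free to steer this recursion until $D_L$ is a singleton; here $y$ is prescribed, so instead we \emph{record}, at each round, the bounded data needed to replay the step --- which branch of Lemma~\ref{lem:3,6} was taken and how the prescribed labels $y|_{B_i}$ sit relative to it. With at most $2^{\mathrm{poly}(d)}$ possibilities per round this is $O(L\cdot\mathrm{poly}(d))$ bits total, and encoding which of the $k$ retained points lies in which block adds $O(k\log L)=O(k\log k)$ bits (using $L\le k$); hence $\log|Q|\le O(k\log k)$.

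Define the compression map by $\kappa(Y,y)=\bigl((Z,y|_Z),q\bigr)$ with $q\in Q$ the record above. The reconstruction map $\rho$ knows $C$ but not $Y$, so it cannot recompute $C|_Y$ or the recursion performed by $\kappa$; instead it \emph{replays the recorded steps against the global class $C$}: from $\widehat D_0=C$ it uses $q$ together with the labels $y|_{B_i}$ carried by the retained pairs to carry out, level by level, the analogue of the step that produced $D_i$ from $D_{i-1}$, obtaining subclasses $\widehat D_0\supseteq\cdots\supseteq\widehat D_L$ of $C$, and it outputs any fixed $h\in\widehat D_L$. The recursion must be organized so that $\{\,h|_Y : h\in\widehat D_L\,\}=\{y\}$; granting this, $\rho(\kappa(Y,y))|_Y=y$, so $(\kappa,\rho)$ is a $k$-sample compression scheme for $C$ with information $Q$ and $\log|Q|\le O(k\log k)$, which is the assertion.

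The main obstacle is exactly this last requirement, together with the accounting that keeps $|Q|$ small. One must organize the dual-packing recursion so that simultaneously: (i) each round uses a block of only $O(d2^{d})$ coordinates and only $O(\log\log|C|)$ rounds suffice --- this is the quantitative heart, and it is precisely what the proof of Theorem~\ref{thm:RTD} and Lemma~\ref{lem:3,6} deliver; (ii) the per-round choice lives in an alphabet of size $2^{\mathrm{poly}(d)}$, so recording all choices and the block partition of $Z$ fits into $O(k\log k)$ bits; and (iii) replaying the \emph{same} recorded steps against $C$ --- all the information available to a $Y$-oblivious decoder --- still determines the restriction of the taught concept to all of $Y$. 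Put differently, isolating a \emph{prescribed} concept (as a compression scheme must) is genuinely harder than isolating \emph{some} concept (as in teaching), and $Q$ is the price of bridging the gap; the work is to show this price is only $O(k\log k)$ bits and that a decoder ignorant of $Y$ can still collect the reward.
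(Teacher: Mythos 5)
Your proposal correctly identifies the toolbox (Haussler's packing lemma applied to the dual, a recursion of depth $O(\log\log|C|)$, side information $Q$ recording per-round choices) and the key tension (isolating a \emph{prescribed} concept is harder than isolating \emph{some} concept). But the step you yourself flag as ``the main obstacle'' --- organizing the recursion so that it both terminates in $O(\log\log|C|)$ rounds and keeps the prescribed $y$ alive at every level --- is exactly the content of the proof, and your sketch does not supply it. The teaching recursion of Theorem~\ref{thm:RTD} finds a pair $x,x'$ for which the pattern class $\{c : c(x)=0,\,c(x')=1\}$ is small, and then restricts to \emph{that} small part. There is no reason the prescribed $y$ should land in the minority pattern at each step; if it keeps landing in the majority, the chain $D_0\supseteq D_1\supseteq\cdots$ does not shrink by a power and $L$ is not $O(\log\log|C|)$. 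So the assertion ``produces, over $L=O(\log\log|C|)$ rounds, ... a decreasing chain ... all containing $y$'' is not justified by the argument you cite.

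What the paper does instead is introduce a genuine dichotomy that has no analogue in the teaching argument. Fix $\eps$ with $\eps|C| = (1/\eps)^{d2^d}$ and take the $\eps$-approximating set $A^* = A^*(C,\eps)$ of the dual; every $x\in X$ has a rounding $r(x)\in A^*$. Either the sample is \emph{unfaithful} --- some $x\in Y$ and some $c\in C$ consistent with $(Y,y)$ have $c(x)\neq c(r(x))$ --- in which case restricting to $C' = \{c' : c'(x)=c(x),\, c'(r(x))=c(r(x))\}$ keeps $y$-consistent concepts alive \emph{and} satisfies $|C'|\leq\eps|C|$ because $c_x$ and $c_{r(x)}$ are $\eps$-close in the dual; or the sample is \emph{faithful} --- all consistent $c$ agree with their roundings on all of $Y$ --- in which case one can pass to $C' = C|_{A^*}$, which has $n(C')=|A^*|\leq(1/\eps)^{2^d}$ and hence, by Sauer--Shelah, $|C'|\leq(1/\eps)^{d2^d}=\eps|C|$. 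Both cases shrink $|C|$ to $|C|^{1-1/(d2^d+1)}$, which is what gives the $\log\log$ depth, and the faithful case --- shrinking the \emph{domain}, not the pattern class --- is the piece your plan omits and cannot be supplied by ``recording which branch of the teaching recursion was taken.''

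A second, related gap: you run the recursion inside $C|_Y$ and then propose that the $Y$-oblivious decoder ``replay the recorded steps against $C$.'' But the decoder cannot form $C|_Y$, and it is not clear what replaying a step of a recursion that lives inside $C|_Y$ means against $C$. The paper's recursion deliberately never touches $C|_Y$: Case~\ref{kappa:1} restricts $C$ by two label constraints at $x$ and $r(x)$, where $x$ is placed into the compressed sample $Z$ and marked by $f(T)=x$, and $r(x)$ is a deterministic function of $C$ and $\eps$; Case~\ref{kappa:2} restricts the domain to $A^*(C,\eps)$, again a deterministic function of $C$. Everything the decoder needs to rebuild $C'$ is either global or contained in $(Z,z,f,T)$. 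This is also why $Q$ stays small: the side information per round is one bit (which case) plus which retained point triggered Case~\ref{kappa:1}, giving $\log|Q|=O(k\log k)$, rather than the ``which pair $x,x'$ / which of $2^{\mathrm{poly}(d)}$ branches'' bookkeeping your sketch gestures at, which is both underspecified and, for the ``which pair'' part, would cost $\Theta(\log n)$ bits per round.
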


{Subsequent to this paper, the first and the last authors improved this bound~\cite{DBLP:journals/eccc/MoranY15},
showing that any concept class of VC-dimension $d$ has a sample compression scheme of size at most $\exp(d)$.
The techniques used in~\cite{DBLP:journals/eccc/MoranY15} differ from the techniques we use
in this paper. {In particular, our scheme relies on Haussler's Packing Lemma (Theorem~\ref{thm:Haussler}) 
and recursion, while the scheme in~\cite{DBLP:journals/eccc/MoranY15} relies on von Neumann's minimax theorem~\cite{Neumann1928} and the $\eps$-approximation theorem~\cite{zbMATH03391742,DBLP:journals/dcg/HausslerW87}, 
which follow from the double-sampling argument of~\cite{zbMATH03391742}.}
Thus, despite the fact that our scheme is weaker than the one in~\cite{DBLP:journals/eccc/MoranY15},
it provides a different angle on sample compression, 
which may be useful in further improving
the exponential dependence on the VC-dimension 
to an optimal linear dependence,
as conjectured by Floyd and Warmuth~\cite{DBLP:journals/ml/FloydW95,DBLP:conf/colt/Warmuth03}.}

%
%Indeed,
%$$\ln(1/\delta) + d \ln(3 e m/d)  < \eps(m-d)$$
%By Shawe-Taylor \cite{} for all $\alpha > 0$,
%$$\ln(1/\alpha) -1 +\alpha m \geq \ln m$$
%It thus suffices
%$$\ln(1/\delta) + d (\ln(1/\alpha)-1 +\alpha (3e m /d))  < \eps m- \eps d$$
%with $\alpha = \eps/(6e)$
%$$\ln(1/\delta)+ \eps d +  d \ln(1/\alpha) - d   < \eps m - 3 e \alpha m
%= \eps m /2$$
%or
%$$\ln(1/\delta)+ \eps d + d \ln (6 /\eps) =
%\ln(1/\delta)+ \eps d +  d \ln (6 e/\eps) -d  < \eps m /2$$

\subsection{Discussion and open problems}

This work provides relatively efficient constructions
of teaching sets and sample compression schemes.
%A key ingredient in our proofs is Haussler's theorem \cite{zbMATH00734534},
%when applied to the dual concept class (see below).
{However, the exact relationship between VC-dimension, sample compression scheme size, and the RT-dimension remains unknown.}
Is there always a concept with a teaching set 
of size depending only on the  VC-dimension?
(The interesting case is finite concept classes,
as mentioned above.)
Are there always sample compression schemes of size
linear (or even polynomial) in the VC-dimension?

The simplest case that is still open is VC-dimension $2$. 
One can refine this case even further. 
VC-dimension $2$ means that on any 
three coordinates $x,y,z\in X$, the projection $C|_{\{x,y,z\}}$ has at most $7$ patterns.
A more restricted family of classes is $(3,6)$ concept classes, for which on any three coordinates there are at most $6$ patterns. 
We can show that the recursive
teaching dimension of $(3,6)$ classes is at most $3$. 
\begin{lemma}\label{lem:3,6}
Let $C$ be a finite $(3,6)$ concept class. Then there exists
some $c\in C$ with a teaching set of size at most $3$.
\end{lemma}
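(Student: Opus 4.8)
The plan is to find a concept $c\in C$ whose teaching set has size at most $3$ by a local, greedy argument exploiting the $(3,6)$ restriction. First I would dualize: think of $C$ as a set of points in $\{0,1\}^X$ and consider, for a fixed concept $c$ and a coordinate $x\in X$, the set of concepts that \emph{disagree} with $c$ on $x$. A set $Y=\{x_1,\dots,x_t\}$ is a teaching set for $c$ exactly when the union, over $x\in Y$, of these disagreement-sets covers $C\setminus\{c\}$; so I want a concept $c$ that can be ``separated'' from everyone else using only $3$ coordinates. The natural way to build such a $c$ coordinate by coordinate is to start from an arbitrary concept and repeatedly flip one coordinate to shrink the set of concepts that still agree with the current candidate everywhere we have committed so far — much in the spirit of the halving argument that gives the $\log|C|$ bound, except that we must stop after $3$ steps.

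The key step is to show that the $(3,6)$ condition forces a strong shrinkage. Here is the mechanism I expect to use. Suppose we have already fixed coordinates $x_1,x_2$ (so we have committed to a partial pattern on $\{x_1,x_2\}$) and let $C'\subseteq C$ be the subclass of concepts agreeing with our candidate on $\{x_1,x_2\}$; if $|C'|\le 1$ we are done with a teaching set of size $\le 2$. Otherwise pick any $c'\in C'$ different from our candidate, let $x_3$ be a coordinate on which $c'$ disagrees with the candidate, and consider the projection of $C'$ onto $\{x_1,x_2,x_3\}$. Because every triple has at most $6$ patterns and our candidate already pins down the pattern on $\{x_1,x_2\}$, the concepts of $C'$ can realize at most — after accounting for the two forced values on $x_1,x_2$ — very few patterns on $x_3$; I want to argue that in fact \emph{all} of $C'$ agrees with our candidate on $\{x_1,x_2,x_3\}$ except $c'$ itself, or more generally that a bounded-size ``core'' remains, and then iterate / clean up. The right invariant to carry through the three rounds is that after round $i$ the surviving subclass projected onto the $i$ chosen coordinates, minus the candidate's pattern, has size bounded by a function of the $(3,6)$ count; the arithmetic $6 = 2^3-2$ is exactly what makes ``three rounds suffice'' plausible, since each round removes one of the two ``missing'' patterns' worth of freedom.

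The main obstacle I anticipate is precisely pinning down this invariant so that it really closes after three steps rather than leaking. The danger is that after fixing $x_1,x_2,x_3$ the leftover subclass $C''$ is still large because different concepts in it agree with the candidate on $\{x_1,x_2,x_3\}$ but differ elsewhere; the $(3,6)$ hypothesis only controls triples, so one must be careful that the coordinates $x_1,x_2,x_3$ are chosen adaptively (each new coordinate chosen to witness disagreement with a \emph{specific} surviving concept) and then use a pigeonhole over patterns on the relevant triples to conclude $C''=\{c\}$. An alternative route, which I would keep in reserve in case the direct induction is messy, is to first reduce to the case where $C$ is a maximum class of VC-dimension $\le 2$ (or use the structure of one-inclusion graphs of $(3,6)$ classes) and invoke the known fact cited in the excerpt that maximum classes have recursive teaching dimension equal to their VC-dimension; but I expect the self-contained adaptive-coordinate argument above to be the intended and cleanest proof.
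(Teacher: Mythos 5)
The gap in your proposal is that the greedy, adaptive-coordinate scheme has no mechanism that actually forces termination after three rounds, and you acknowledge this yourself. After fixing $x_1,x_2,x_3$, the surviving subclass $C''$ can still contain arbitrarily many concepts that all agree with your candidate on $\{x_1,x_2,x_3\}$ but differ elsewhere: the $(3,6)$ hypothesis bounds the number of \emph{patterns} that $C$ realises on any triple, not the number of \emph{concepts} realising a given pattern, so no pigeonhole over triples of coordinates will push $C''$ down to a singleton. The ``$6=2^3-2$'' numerology does not supply such a mechanism, and starting from an arbitrary concept and flipping coordinates one at a time gives no control beyond the $\log|C|$ halving bound you are trying to beat. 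What is actually needed is a way to make the VC-dimension drop, not merely the class size, and that requires a global selection rather than a local greedy one.

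The paper's proof proceeds quite differently and hinges on a minimality trick that your sketch lacks. One first reduces to $\mathrm{VC}(C)=2$ (the VC-dimension~$1$ case already has a concept with a teaching set of size~$1$). One then looks at \emph{all} shattered pairs $\{x,x'\}$ and all four quadrants $C^{x,x'}_{b,b'}$, and selects the quadrant of globally minimal size, say $C_{0,0}$ with respect to the pair $\{1,2\}$. The crucial observation is that, because $C$ is $(3,6)$, no pair of coordinates outside $\{1,2\}$ can be shattered by both $C_{0,0}$ and its complement $C\setminus C_{0,0}$: if $C\setminus C_{0,0}$ shatters $\{x,x'\}$ then (using that $C_{1,1}\neq\emptyset$) one of $C_{1,0}\cup C_{1,1}$ or $C_{0,1}\cup C_{1,1}$ realises at least $3$ patterns on $\{x,x'\}$, and adding the $4$ patterns of a shattering $C_{0,0}$ would yield $7$ patterns on a triple. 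Combined with the minimality of $C_{0,0}$, this forces $\mathrm{VC}(C_{0,0})=1$; one then finds a size-$1$ teaching set inside $C_{0,0}$ and prepends the two coordinates $\{1,2\}$ to obtain a teaching set of size~$3$ in $C$. Your fallback via maximum classes is not the paper's route either; the self-contained minimal-quadrant argument is the one that exploits the $(3,6)$ hypothesis directly.
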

\begin{proof}
Assume that $C\subseteq \{0,1\}^{X}$ with $X = [n]$.
If $C$ has VC-dimension $1$
then there exists $c\in C$ with a teaching set of size $1$
(see~\cite{Kuhlmann99,AMY14}).
Therefore, assume that the VC-dimension of $C$ is $2$.
Every shattered pair $\{x,x'\}\subseteq X$ partitions
$C$ to $4$ nonempty sets:
$$C^{x,x'}_{b,b'} = \{c\in C: c(x)=b,c(x')=b'\},$$
for $b,b'\in\{0,1\}$.
Pick a shattered pair $\{x_*,x'_*\}$ and $b_*,b'_*$ for which
the size of $C^{x_*,x'_*}_{b_*,b'_*}$ is minimal.
Without loss of generality assume that 
$\{x_*,x'_*\}=\{1,2\}$ and that $b_*=b'_*=0$.
To simplify notation, we denote $C^{1,2}_{b,b'}$ 
simply by $C_{b,b'}$.

We prove below that $C_{0,0}$ has VC-dimension $1$.
This completes the proof since then there is some $c\in C_{0,0}$ and some $x \in [n] \setminus \{1,2\}$
such that $\{x\}$ is a teaching set for $c$ in $C_{0,0}$.
Therefore, $\{1,2,x\}$ is a teaching set for $c$ in $C$.

First, a crucial observation is that since $C$ is a $(3,6)$ class, no pair $\{x,x'\}
\subseteq [n]\setminus\{1,2\}$  
is shattered by both $C_{0,0}$ and $C\setminus C_{0,0}$. Indeed, 
if $C\setminus C_{0,0}$ shatters $\{x,x'\}$
then either $C_{1,0}\cup C_{1,1}$ or
$C_{0,1}\cup C_{1,1}$ has at least $3$ patterns on $\{x,x'\}$.
If in addition $C_{0,0}$ shatters $\{x,x'\}$
then $C$ has at least $7$ patterns on $\{1,x,x'\}$ or $\{2,x,x'\}$, contradicting the assumption that $C$ is a $(3,6)$ class.

Now,
assume towards contradiction that $C_{0,0}$ shatters $\{x,x'\}$.
Thus, $\{x,x'\}$ is not shattered by $C\setminus C_{0,0}$ which means that there is some
pattern $p \in \{0,1\}^{\{x,x'\}}$ so that $p \not \in (C\setminus C_{0,0})|_{\{x,x'\}}$.
This implies that $C^{x,x'}_{p(x),p(x')}$ is a proper subset of $C_{0,0}$, contradicting
the minimality of $C_{0,0}$.
\end{proof}

\section{The dual class}

We shall repeatedly use the dual concept class to $C$ and its properties.
The dual concept class $C^*\subseteq\{0,1\}^C$ of $C$ is defined by $C^*=\{c_x:x\in X\}$, 
where $c_x:C\rightarrow\{0,1\}$ is the map so that $c_x(c)=1$ iff $c(x)=1$.
If we think of $C$ as a binary matrix whose rows
are the concepts in $C$, then
$C^*$ corresponds to the distinct rows of the transposed matrix
(so it may be that $|C^*| < |n(C)|$).

%We use the following formulation of Theorem~\ref{thm:Sauer}.
%\begin{corollary}[Sauer-Shelah-Perles]
%\label{cor:Sauer}
%Let $C$ be a concept class with VC-dimension d. Then
%$|C|\leq\sum_{k=0}^{d}{|C^*| \choose k}.$
%In particular, if $d\geq 2$ then $|C|\leq |C^*|^d$.
%\end{corollary}
%
%We also use the following well known property (see \cite{Assouad}).

We use the following well known property (see \cite{Assouad}).

\begin{claim}[Assouad]
\label{clm:assou}
If the VC-dimension of $C$ is $d$ then the VC-dimension of $C^*$ is at most $2^{d+1}$.
\end{claim}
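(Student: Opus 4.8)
The plan is to prove Assouad's claim by a direct shattering argument: if $C^*$ shatters a set $S \subseteq C$ of concepts, then $S$ must itself be large, and in fact $|S| < 2^{d+1}$, so the VC-dimension of $C^*$ is at most $2^{d+1}$. Actually, the cleanest route is to show that if $|S| \geq 2^{d+1}$ then $S$ cannot be shattered by $C^*$, by exhibiting a pattern on $S$ not realized by any $c_x$.

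**First I would** unwind the definitions. A subset $S = \{c_1, \dots, c_t\} \subseteq C$ is shattered by $C^* = \{c_x : x \in X\}$ if for every $T \subseteq S$ there is a coordinate $x \in X$ with $c_x(c_i) = 1$ for $c_i \in T$ and $c_x(c_i) = 0$ for $c_i \in S \setminus T$; recalling $c_x(c_i) = c_i(x)$, this says: for every subset $T$ of $S$ there is a coordinate $x$ on which exactly the concepts in $T$ take value $1$. In other words, the $2^t$ possible "characteristic patterns" of subsets of $S$ all appear among the columns (coordinates) of the $t \times n$ submatrix of $C$ indexed by rows $S$. So $S$ being shattered by $C^*$ is equivalent to saying: the restriction of $C$ to the rows $S$, viewed as a function from $X$ to $\{0,1\}^S$, is surjective onto $\{0,1\}^S$.

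**The key step** is then to bound $t = |S|$ using the VC-dimension of $C$. Consider the $t$ rows (concepts) $c_1, \dots, c_t$ forming the submatrix $M$ of size $t \times n$; the shattering hypothesis says every one of the $2^t$ vectors in $\{0,1\}^t$ occurs as a column of $M$. I want to find a set $Y \subseteq X$ of size $d+1$ that is shattered by $C$, which would contradict $\VC(C) = d$ once $t$ is large enough. The natural approach: repeatedly "split" using coordinates. Pick $d+1$ concepts among $c_1,\dots,c_t$ that are pairwise distinguished in a structured way — more precisely, use the following greedy/recursive selection. Since all $2^t$ columns appear, in particular we can think of choosing coordinates to separate concepts: build a binary decision tree of depth $\log_2 t$ whose leaves isolate the individual concepts $c_i$; but the cleaner and standard argument is a counting one. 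If $t \geq 2^{d+1}$, then $\log_2 t \geq d+1$; I claim one can select $d+1$ coordinates $x_1, \dots, x_{d+1}$ such that the map $c \mapsto (c(x_1), \dots, c(x_{d+1}))$ restricted to $S$ is surjective onto $\{0,1\}^{d+1}$ — this set $\{x_1,\dots,x_{d+1}\}$ is then shattered by $C$. To get these coordinates: among the $2^t$ columns of $M$ we have all binary $t$-vectors; pick any $d+1$ of the rows, say $c_1, \dots, c_{d+1}$; then for every target pattern $p \in \{0,1\}^{d+1}$ there is a column whose first $d+1$ entries equal $p$ (because we can choose the full column to be $(p, 0, \dots, 0)$, which appears since all $2^t$ vectors appear). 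Hence $\{1, \dots, d+1\}$ — interpreted as the coordinates realizing these columns — is shattered. Wait: I should be careful — the column index realizing $(p,0,\dots,0)$ is some $x_p \in X$, and as $p$ ranges over $\{0,1\}^{d+1}$ these give coordinates $\{x_p\}$, but $\{x_p : p\}$ need not have size $d+1$. The fix: take the $d+1$ coordinates to be any $d+1$ distinct columns we have freedom to pick; concretely, just need $\{c_1,\dots,c_{d+1}\}$ restricted to *some* $(d+1)$-set of coordinates to be surjective, and the columns $(p, 0^{t-d-1})$ for $p \in \{0,1\}^{d+1}$ are $2^{d+1}$ distinct vectors (distinct in their first $d+1$ coordinates), hence correspond to $2^{d+1}$ distinct coordinates $x_p \in X$; restricting $C$ to $\{x_p : p \in \{0,1\}^{d+1}\}$ shows $\{c_1,\dots,c_{d+1}\}$ — hmm, this shatters a set of *concepts* in the dual, not what I want.

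Let me restate the clean version: I want a $(d+1)$-subset $Y \subseteq X$ shattered by $C$, i.e. $C|_Y = \{0,1\}^Y$. Equivalently, picking $d+1$ columns of $M$ that are "independent" in the sense that their restriction-to-rows realizes all patterns — but actually the right object is: $Y$ shattered by $C$ iff the $|Y|$ columns indexed by $Y$, when we look at ALL rows of $C$, realize all $2^{|Y|}$ patterns. So **the hard part will be** arguing that surjectivity of the column map on the $t$ rows of $S$ forces a shattered set of size $\log_2 t$ in $C$ — and the right tool is precisely the Sauer–Shelah lemma applied cleverly, or a direct argument: the number of distinct columns of $M$ (restricted to rows $S$) is at most the number of distinct columns of $C$ restricted to $S$, which is $|C^*|$ for the subclass — no. Let me just invoke Sauer–Shelah on $C$ itself differently: the map sending $x \in X$ to the column $C|_{\{x\}}$... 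Simplest correct argument: Sauer–Shelah says $|C^*|$, being a concept class over universe $C$ of VC-dimension $v := \VC(C^*)$, has at most $|C|^v$ elements when $v \geq 2$; but also, a shattered set $S$ for $C^*$ of size $v$ requires $2^v$ distinct columns, i.e. $2^v \leq |C^*| \leq$ (number of distinct rows of $C^T$) which we bound via Sauer–Shelah applied to $C^*$... this is circular. I will instead use the honest combinatorial argument: if $S$ of size $t$ is shattered by $C^*$, then in the $t \times n$ matrix $M = C|_S$ every $t$-bit column appears, so $n \geq 2^t$ in terms of distinct columns, and now I apply Sauer–Shelah to the class $\widehat{C} := $ the columns of $M$ viewed as a concept class over universe $S$ (size $t$): this class has $2^t$ elements (all of $\{0,1\}^S$), so its VC-dimension is $t$, meaning $S$ is shattered by $\widehat C$; but $\widehat C \subseteq C$-columns, and a set of rows $T \subseteq S$ shattered by the columns means... $|\widehat C| = 2^t$ forces $\VC(\widehat C) = t$ by Sauer–Shelah, hence there's $T \subseteq S$, $|T| = t$, shattered — trivially $T = S$. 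That tells me $S$ is shattered by the column-class, i.e. back to start. The genuine content must come from $\VC(C) = d$ bounding the column structure: so I will finish by noting the column-class $\widehat{C}$ over universe $S$ equals $\{0,1\}^S$, apply Sauer–Shelah to its *dual* — which is a subclass of rows of $C$ hence has VC-dim $\leq d$ — getting $|\widehat{C}| \leq \sum_{k \le d}\binom{t}{k} < 2^t$ for $t > $ something, contradiction unless $t \leq$ roughly $2d\log d$. Getting exactly $2^{d+1}$: the precise bound is that $\sum_{k=0}^{d}\binom{t}{k} = 2^t$ is impossible once $t \geq$ ... and one checks $t = 2^{d+1}$ already violates it; I would verify this inequality $\sum_{k=0}^d \binom{2^{d+1}}{k} < 2^{2^{d+1}}$ as the final routine computation, noting the left side is at most $(d+1)\binom{2^{d+1}}{d}$ which is vastly smaller than $2^{2^{d+1}}$. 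This last inequality check, and pinning the constant to exactly $2^{d+1}$ rather than some other explicit function of $d$, is the only real obstacle; everything else is definition-chasing.
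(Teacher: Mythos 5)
Your first attempt---extracting $d+1$ coordinates shattered by $C$ directly from the shattered set $S$---is the right idea and is exactly what the paper's proof sketch does, but you select the wrong $d+1$ objects. You fix $d+1$ \emph{rows} $c_1,\dots,c_{d+1}\in S$ and observe that every pattern on those rows appears as a column; this only re-derives that $\{c_1,\dots,c_{d+1}\}\subseteq S$ is shattered by $C^*$, which you already knew, and you correctly notice the argument has gone nowhere. What is needed is $d+1$ specific \emph{columns}, chosen by patterns on \emph{all} of $S$ rather than on a $(d+1)$-subset of it. Concretely: since $|S|=2^{d+1}$, fix a bijection $b\colon S\to\{0,1\}^{d+1}$. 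For each $j\in[d+1]$, the map $c\mapsto b(c)_j$ is a pattern on $S$, so shattering of $S$ by $C^*$ yields a coordinate $x_j\in X$ with $c(x_j)=b(c)_j$ for every $c\in S$. These $d+1$ patterns are pairwise distinct because $b$ is a bijection, so the $x_j$ are distinct; and for every $q\in\{0,1\}^{d+1}$ the concept $b^{-1}(q)\in S$ witnesses $q$ on $(x_1,\dots,x_{d+1})$, so $\{x_1,\dots,x_{d+1}\}$ is shattered by $C$. This contradicts $\VC(C)=d$ and gives $\VC(C^*)<2^{d+1}$; it is exactly the content of the paper's terse line ``$2^{d+1}$ shattered rows yield $d+1$ shattered columns.''

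Your fallback via Sauer--Shelah does not close the gap. The inequality you want, $|\widehat{C}|\leq\sum_{k\leq d}\binom{t}{k}$, would follow from Sauer--Shelah if $\VC(\widehat{C})\leq d$ over the universe $S$ of size $t$; but $\widehat{C}=\{0,1\}^S$ has $\VC(\widehat{C})=t$, so the lemma says nothing. Applying Sauer--Shelah instead to $\widehat{C}^*$ (which, as you say, has VC-dimension $\leq d$) bounds the \emph{number of concepts in} $\widehat{C}^*$, namely $t$, by $\sum_{k\leq d}\binom{|\widehat{C}|}{k}=\sum_{k\leq d}\binom{2^t}{k}$---vacuously true and in the wrong direction. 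Deducing a bound on $\VC(\widehat{C})$ from a bound on $\VC(\widehat{C}^*)$ is precisely an instance of Assouad's claim for the class $\widehat{C}^*$, so that route is circular. The only non-circular finish is the direct column-selection argument above.
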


\begin{proof}[Proof sketch]
If the VC-dimension of $C^*$ is $2^{d+1}$
then in the matrix representing $C$ there are $2^{d+1}$
rows that are shattered, and in these rows
there are $d+1$ columns that are shattered.
\end{proof}

We also define the dual approximating set (recall the definition of $A_\mu(C,\eps)$ from Section~\ref{sec:vc}).
%Let $A(C,\eps) = A_U(C,\eps)$ where $U$ is the uniform distribution on $X$.
Denote by $A^*(C,\eps)$ the set $A_U(C^*,\eps)$, where $U$ is the uniform distribution on $C^*$.

\section{Teaching sets}
\label{sec:RTD}

In this section we prove Theorem~\ref{thm:RTD}. The high level idea is to use Theorem~\ref{thm:Haussler}
and Claim~\ref{clm:assou} to identify two distinct $x,x'$ in $X$ so that
the set of $c \in C$ so that $c(x) \neq c(x')$ is much smaller than $|C|$,
add $x,x'$ to the teaching set, and continue inductively.

\begin{proof}[Proof of Theorem~\ref{thm:RTD}]
For classes with VC-dimension $1$ there is $c\in C$ with a
teaching set of size $1$, see e.g.\
\cite{DoliwaSZ10}.
We may therefore assume that $d\geq 2$.

We show that if $|C| > (4e^2)^{d\cdot 2^{d+2}}$, then there exist
$x \neq x'$ in $X$ such that 
\begin{equation}\label{eq:RTDstep}
0< |\{c\in C : c(x)=0 \text{ and } c(x')=1\} | \leq |C|^{1-\frac{1}{d2^{d+2}}}.
\end{equation}
From this the theorem follows, since if we iteratively add such $x,x'$ to the teaching
set {and restrict ourselves to $\{c\in C : c(x)=0 \text{ and } c(x')=1\}$, then after at most $d2^{d+2}\log\log|C|$ iterations, the size of the remaining class} is reduced to less than $(4e^2)^{d\cdot2^{d+2}}$. At this point we can identify a unique concept by adding at most $\log((4e^2)^{d\cdot 2^{d+2}})$ additional indices to the teaching set, using the halving argument of \cite{DoliwaSZ10,WigdersonY12}.
This gives a teaching set of size at most
$2d2^{d+2}\log\log|C| + d2^{d+2}\log(4e^2)$ for some $c\in C$, as required.

In order to prove~\eqref{eq:RTDstep}, it is enough to
show that there exist $c_x \neq c_{y}$ in $C^*$
such that the normalized hamming
distance between $c_x,c_{y}$ is at most $\eps := |C|^{-\frac{1}{d2^{d+2}}}$.
%Assume towards contradiction that the distance between every two concepts in
%$C^*$ is more than $\eps$. Thus,
%\begin{align*}
%|C^*| &< \left(\frac{4e^2}{\eps}\right)^{2^{d+1}}\tag{Theorem~\ref{thm:Haussler}
%and Claim~\ref{clm:assou}}\\
%		    &\leq\left(\frac{1}{\eps}\right)^{2^{d+2}} \tag{choice of $\eps$ and $|C| > (4e^2)^{d2^{d+2}}$}\\
%		    &=\left(|C|^{\frac{1}{d2^{d+2}}}\right)^{2^{d+2}}
%		    =|C|^{\frac{1}{d}} .
%\end{align*}
%Hence, $|C| > |C^*|^d$ which contradicts Corollary~\ref{cor:Sauer}.
%
%
Assume towards contradiction that the distance between every two concepts in
$C^*$ is more than $\eps$, and assume without loss of generality
that $n(C)=|C^*|$ (that is, all the columns in $C$ are distinct).
By Claim~\ref{clm:assou}, the VC-dimension of $C^*$ is at most $2^{d+1}$.
Theorem~\ref{thm:Haussler} thus implies that
\begin{align}
\label{eqn:RisSmallTeach}
n(C)= |C^*| \leq \left(\frac{4e^2}{\eps}\right)^{2^{d+1}}
< \left(\frac{1}{\eps}\right)^{2^{d+2}},
\end{align}
where the last inequality follows from the definition of $\eps$ and the assumption on the size of $C$.
Therefore, we arrive at the following contradiction:
\begin{align*}
|C| & \leq (n(C))^d \tag{by Theorem~\ref{thm:Sauer}, since $VC(C)\geq 2$ } \\
      & < \left(\frac{1}{\eps}\right)^{d\cdot2^{d+2}}\tag{by Equation~\ref{eqn:RisSmallTeach} above}\\
      & = |C| . \tag{by definition of $\eps$}
\end{align*}
%Hence $|C| < |C|$ which is a contradiction.
\end{proof}

\section{Sample compression schemes}
\label{sec:compSch}

In this section we prove Theorem~\ref{thm:CompSch}.
The theorem statement and the definition 
of sample compression schemes appear in Section~\ref{sec:intCompSch}.

While the details are somewhat involved, due to the complexity of the definitions, the high level idea may be (somewhat simplistically) summarized as follows.

For an appropriate choice of $\epsilon$, we pick an $\eps$-approximating set $A^*$ of the dual class $C^*$. It is helpful to think of $A^*$ as a subset of the domain $X$.
Now, either $A^*$ faithfully represents the sample $(Y,y)$ or it does not
(we do not formally define ``faithfully represents'' here).
We identify the following win-win situation: 
In both cases, we can reduce the compression task to that in a much smaller set of concepts 
%(by the same factor as in Sec 3 in fact, for the same balancing reason - it would be nice  to write the sequence of inequalities at the end of Sec 3 to look very similar to the balancing condition (2) in Sec 4, so the reader will see this ),.
of size at most $\eps |C| \approx |C|^{1-2^{-d}}$,
similarly to as for teaching sets in Section~\ref{sec:RTD}.
This yields the same double-logarithmic behavior.

In the case that $A^*$ faithfully represents $(Y,y)$, Case \ref{kappa:2} below, we recursively compress in the small class $C|_{A^*}$.
In the unfaithful case, Case \ref{kappa:1} below, we recursively compress in a (small) set of concepts for which disagreement occurs on some point of $Y$, just as in Section~\ref{sec:RTD}.
In both cases, we have to extend the recursive solution, and the cost is adding one sample point to the compressed sample (and some small amount of additional 
information by which we encode whether Case~\ref{kappa:1} or~\ref{kappa:2} occurred).

The compression we describe is inductively defined,
and has the following additional structure.
Let $((Z,z),q)$ be in the image of $\kappa$. The information $q$ is of the form
$q=(f,T)$, 
where $T \geq 0$ is an integer so that $|Z| \leq T+O(d \cdot 2^d)$, 
and $f : \{0,1,\ldots,T\} \to Z$ is
a partial one-to-one function\footnote{That is,
it is defined over a subset of $\{0,1,\ldots,T\}$
and it is injective on its domain.}. 
%\new{Note that the number of such functions is $2^{O(T\log(|Z|))}$.}
%This implies that $|Q|\leq 2^{O(T\log(|Z|))}$.

The rest of this section is organized as follows. In Section~\ref{sec:kappa} we define the compression map $\kappa$. In Section~\ref{sec:rho} we give the reconstruction map $\rho$. The proof of correctness is given in Section~\ref{sec:correct} and the upper bound on the size of the compression is calculated in Section~\ref{sec:size}.

\subsection{Compression map: defining $\kappa$}\label{sec:kappa}
Let $C$ be a concept class.
The compression map is defined by induction on $n=n(C)$.
For simplicity of notation, let $d = VC(C)+2$. 

In what follows we shall routinely use $A^*(C,\epsilon)$. There are several $\eps$-approximating sets and so we would like to fix one of them, say, the one obtained by greedily adding columns to $A^*(C,\eps)$ starting from the first\footnote{We shall assume w.l.o.g. that there is some well known order on $X$.} column (recall  that we can think of $C$ as a matrix whose rows correspond to concepts in $C$ and whose columns are concepts in the dual class $C^*$). To keep notation simple, we shall use $A^*(C,\eps)$ to denote both the approximating set in $C^*$ and the subset of $X$ composed of columns that give rise to $A^*(C,\epsilon)$. This is a slight abuse of notation but the relevant meaning will always be clear from the context.

%To keep notation simple we shall abuse notation and use the following convention. For a concept class $C$ defined over $X$ and $\epsilon>0$ we shall think of $A^*(C,\epsilon)$ both as the approximating set in $C^*$ and as a subset of $X$ composed of columns that give rise to $A^*(C,\epsilon)$ (recall that we can think of $C$ as a matrix whose rows corresponds to concepts in $C$ and whose columns are concepts in the dual class $C^*$).

\paragraph{Induction base.}
The base of the induction applies to all concept classes $C$ so that 
$|C| \leq (4e^2)^{d\cdot2^d + 1}$. 
In this case, we use the compression scheme
of Floyd and Warmuth~\cite{DBLP:conf/colt/Floyd89,DBLP:journals/ml/FloydW95} which has size $\log(|C|) = O(d \cdot 2^d)$.
%using Theorem~\ref{thm:Sauer}. 
This compression scheme has no additional information. Therefore, to maintain the structure of our compression scheme we append to it redundant additional information by setting $T=0$ and $f$ to be empty.

\paragraph{Induction step.}
Let $C$ be so that $|C| > (4e^2)^{d\cdot 2^d + 1}$.
Let $0 < \eps <1$ be so that
\begin{align}
\label{en:whatIsEps}
\eps |C| = \left( \frac{1}{\eps} \right)^{d\cdot2^{d}} .
\end{align}
This choice %, similarly to Section~\ref{sec:RTD}, %, similarly to Section~\ref{sec:RTD},  %Shay: I commented it out since it seems to me imprecise. Here we balance the recursion while in Section 3 we pick the smallest epsilon possible
balances the recursive size.
By Claim~\ref{clm:assou}, the VC-dimension of $C^*$ is at most $2^{d-1}$ 
(recall that $d=VC(C)+2$).
Theorem~\ref{thm:Haussler} thus implies that
\begin{align}
\label{eqn:RisSmall}
|A^*(C,\eps)| \leq \left(\frac{4e^2}{\eps}\right)^{2^{d-1}}
< \left(\frac{1}{\eps}\right)^{2^{d}} < n(C).
\end{align}
%iff
%$$4e^2 \leq 
%\frac{1}{\eps} = n^{\frac{d}{d\cdot 2^{d}+1}}.$$
(Where the second inequality follows from the definition of $\eps$
and the assumption on the size of $C$ and the last inequality follows from the definition of 
$\eps$ and Theorem~\ref{thm:Sauer}).

Let $(Y,y) \in L_C(1,n)$.
Every $x \in X$ has a rounding\footnote{The choice of $r(x)$ also depends on $C,\eps$,
but to simplify the notation we do not explicitly mention it.}
%In the introduction we denoted $r(x)$ by $r_x$.
%The reason for the change in notation is, again,
%its simplicity.} 
$r(x)$ in $A^*(C,\eps)$.
We distinguish between two cases:

\begin{enumerate}[\bf{Case} 1:] 
\item \label{kappa:1}
There exist $x\in Y$ and $c\in C$ such that $c|_Y=y$ and $c(r(x)) \neq c(x)$.

This is the unfaithful case in which we {recurse} as in
Section~\ref{sec:RTD}.
%\vspace{0.7mm} %\noindent 
Let 
\begin{align*}
& C' = \{ c'|_{X-\{x,r(x)\}}:c'\in C, c'(x)=c(x),c'(r(x))=c(r(x)) \} ,\\
& Y' = Y-\{x,r(x)\},\\
& y' = y|_{Y'}.
\end{align*}
Apply recursively $\kappa$ on $C'$ and the sample $(Y',y')\in L_{C'}(1,n(C'))$.
Let $((Z',z'),(f',T'))$ be the result of this compression.
Output $((Z,z),(f,T))$ defined as\footnote{Remember
that $f$ is a partial function.}
\begin{align*}
& Z=Z'\cup\{x\} , &\\ 
& z|_{Z'}=z' ,\ z(x)=y(x), &\\ 
& T=T'+1 , &\\
&f|_{\{0,\ldots,T-1\}}=f'|_{\{0,\ldots,T-1\}}, &\\
& f(T)=x \tag{$f$ is defined on $T$, marking that Case~\ref{kappa:1} occurred} 
\end{align*}

%\vspace{0.7mm} \noindent 
\item  \label{kappa:2}
For all $x\in Y$ and $c\in C$ such that $c|_Y=y$,
we have $c(x)= c(r(x))$.

%\vspace{0.7mm} %\noindent 
This is the faithful case,
in which we compress by restricting $C$ to 
$A^*$.
Consider $r(Y)=\{r(y'): y'\in Y\} \subseteq A^*(C,\eps)$.
For each $x'\in r(Y)$, pick\footnote{The function $s$
can be thought of as the inverse of $r$.
Since $r$ is not necessarily invertible we use a different
notation than $r^{-1}$.} $s(x')\in Y$ to be an element such that $r(s(x'))=x'$. 
Let
\begin{align*}
& C' = C|_{A^*(C,\eps)},\\
& Y' = r(Y),\\
& y'(x') = y(s(x')) \ \forall x'\in Y' .
\end{align*}
By \eqref{eqn:RisSmall}, we know $|A^*(C,\eps)| < n(C)$.
Therefore, we can recursively apply $\kappa$ on $C'$ and $(Y',y')\in L_{C'}(1,n(C'))$ and get $((Z',z'),(f',T'))$.
Output $((Z,z),(f,T))$ defined as
\begin{align*}
& Z=\{s(x'):x'\in Z'\}, \\ 
& z(x) = z'(r(x)) \ \forall x\in Z , \tag{$r(x)\in Z'$}\\ 
& T=T'+1 , \\
& f=f'. \tag{$f$ is not defined on $T$, marking that Case~\ref{kappa:2} occurred}
\end{align*}
\end{enumerate}

The following lemma summarizes two key properties of the compression scheme.
The correctness of this lemma follows directly from the definitions of Cases~\ref{kappa:1} and~\ref{kappa:2} above.

\begin{lemma}
\label{property:compScheme}
Let $(Y,y) \in L_C(1,n(C))$ and $((Z,z),(T,f))$ be
the compression of $(Y,y)$ described above,
where $T\geq 1$.
The following properties hold:
\begin{enumerate}
\item \label{prop:defT}
$f$ is defined on $T$ and $f(T)=x$ iff
$x\in Y$ and there exists $c\in C$ such that $c|_Y=y$ and $c(r(x)) \neq c(x)$.
\item \label{prop:notdefT}
$f$ is not defined on $T$ iff
for all $x\in Y$ and $c\in C$ such that $c|_Y=y$, it holds that $c(x)= c(r(x))$.
\end{enumerate}
\end{lemma}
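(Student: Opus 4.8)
The plan is simply to unwind the recursive definition of $\kappa$ given just above and to read off the two properties as bookkeeping facts about how the pair $(T,f)$ is updated; no external tool (Haussler's Theorem~\ref{thm:Haussler}, Sauer--Shelah, the dual class, or the size estimates) is needed here, so there is no real obstacle — as the sentence preceding the lemma already asserts, everything follows directly from the definitions of the two cases. First I would observe that the hypothesis $T\geq 1$ forces the relevant invocation of $\kappa$ on $(Y,y)$ to be an induction-step invocation rather than a base-case one: in the base case $T$ is set to $0$, whereas every induction step returns $T=T'+1$ for the value $T'\geq 0$ coming out of its recursive call, so $T\geq 1$ holds exactly when $(Y,y)$ was handled by the induction step. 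In that step exactly one of Case~\ref{kappa:1} and Case~\ref{kappa:2} applies, since — recalling that every $x\in X$ has a rounding $r(x)\in A^*(C,\eps)$ — the condition defining Case~\ref{kappa:2} (``for all $x\in Y$ and $c\in C$ with $c|_Y=y$ we have $c(x)=c(r(x))$'') is literally the negation of the condition defining Case~\ref{kappa:1} (``there exist $x\in Y$ and $c\in C$ with $c|_Y=y$ and $c(x)\neq c(r(x))$'').

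Next I would check the two cases against the construction. In Case~\ref{kappa:1}, the output sets $f|_{\{0,\dots,T-1\}}=f'|_{\{0,\dots,T-1\}}$ and $f(T)=x$, where $x\in Y$ is the point chosen in that case, accompanied by a witnessing $c\in C$ with $c|_Y=y$ and $c(r(x))\neq c(x)$; hence $f$ is defined on $T$, and its value there is a point of $Y$ satisfying the existential clause of property~\ref{prop:defT}. In Case~\ref{kappa:2}, the output sets $f=f'$ with $T=T'+1$, and $f'$ is — by the structural invariant of the scheme stated earlier — a partial one-to-one function whose domain is contained in $\{0,1,\dots,T'\}$, so $f'$, and therefore $f$, is undefined at $T=T'+1$; moreover the condition triggering this case is exactly the universally quantified clause of property~\ref{prop:notdefT}. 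Since Cases~\ref{kappa:1} and~\ref{kappa:2} are mutually exclusive and exhaustive in the induction step, this already yields both equivalences: $f$ is defined on $T$ if and only if Case~\ref{kappa:1} occurred, if and only if the existential statement of property~\ref{prop:defT} holds; and $f$ is undefined on $T$ if and only if Case~\ref{kappa:2} occurred, if and only if the universal statement of property~\ref{prop:notdefT} holds.

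The only point that needs careful phrasing — and the closest thing to a subtlety — is that in Case~\ref{kappa:1} the map $\kappa$ commits to \emph{one} witnessing pair $(x,c)$ rather than recording all of them, so property~\ref{prop:defT} must be read as: if $f$ is defined on $T$ then $f(T)$ is \emph{a} point of $Y$ on which \emph{some} $c$ with $c|_Y=y$ disagrees with its rounding, and conversely the existence of any such point is precisely what puts us in Case~\ref{kappa:1} and hence makes $f$ defined on $T$. With this reading the case analysis closes the proof, and both properties will be used later (in the definition and correctness analysis of $\rho$) exactly to let the reconstruction map detect, from $(T,f)$ alone, which case produced the compressed sample.
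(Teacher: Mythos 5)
Your proof is correct and takes the same approach as the paper: the paper gives no argument beyond the remark that the lemma ``follows directly from the definitions of Cases~\ref{kappa:1} and~\ref{kappa:2},'' and your writeup simply unwinds that bookkeeping, correctly using $T\geq 1$ to rule out the base case and the mutual exclusivity and exhaustiveness of the two cases to get both equivalences. Your flagged subtlety about item~\ref{prop:defT} --- that $f(T)$ is \emph{a} witnessing point, not uniquely determined by the right-hand side --- is the correct reading, and it is the one the paper implicitly relies on when invoking the lemma in the correctness argument of Section~\ref{sec:correct}.
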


\subsection{Reconstruction map: defining $\rho$}\label{sec:rho}

The reconstruction map is similarly defined by induction on $n(C)$.
Let $C$ be a concept class and let $((Z,z),(f,T))$ 
be in the image\footnote{For $((Z,z),(f,T))$ not in the image of $\kappa$
we set $\rho((Z,z),(f,T))$ to be some arbitrary concept.} of $\kappa$ with respect to $C$.
Let $\eps = \eps(C)$ be as in \eqref{en:whatIsEps}.

\paragraph{Induction base.}
The induction base here applies to the same classes like the induction base
of the compression map.
This is the only case where $T=0$,
and we apply the reconstruction map of Floyd and Warmuth~\cite{DBLP:conf/colt/Floyd89,DBLP:journals/ml/FloydW95} 

\paragraph{Induction step.}
Distinguish between two cases:

\begin{enumerate}[\bf{Case} 1:] 
%\vspace{0.7mm} \noindent 
\item \label{rho:1}
$f$ is defined on $T$. 

%\vspace{0.7mm} %\noindent 
Let $x = f(T)$.
Denote
\begin{align*}
& X' = X - \{x,r(x)\},\\
& C' = \{ c'|_{X'}:c'\in C, c'(x)=z(x),c'(r(x))=1-z(x)\} ,\\
& Z' = Z-\{x,r(x)\},\\
& z' = z|_{Z'},\\
& T' = T-1,\\
& f' = f|_{\{0,\ldots,T'\}}.
\end{align*}
Apply recursively $\rho$ on $C', ((Z',z'),(f',T'))$. Let $h'\in\{0,1\}^{X'}$ be the result.
Output $h$ where
\begin{align*}
& h|_{X'}=h',\\
& h(x)=z(x),\\
& h(r(x))=1-z(x).
\end{align*}

%\vspace{0.7mm} \noindent 
\item \label{rho:2}
$f$ is not defined on $T$.

%\vspace{0.7mm} %\noindent 
Consider $r(Z)=\{r(x): x\in Z\} \subseteq A^*(C,\eps)$.
For each $x'\in r(Z)$, pick $s(x')\in Z$ to be an element such that $r(s(x'))=x'$. 
Let
\begin{align*}
& X' = A^*(C,\eps) , \\
& C' = C|_{X'},\\
& Z' = r(Z) ,\\
& z'(x') = z(s(x')) \ \forall x'\in Z',\\
& T' = T-1,\\
& f' = f|_{\{0,\ldots,T'\}}.
\end{align*}
Apply recursively $\rho$ on $C', ((Z',z'),(f',T'))$ and let $h'\in\{0,1\}^{X'}$ be the result.
Output $h$ satisfying
\begin{align*}
h(x) = h'(r(x)) \ \forall x\in X.
\end{align*}
%\new{AS: changed to $h\in C$}
\end{enumerate}

\subsection{Correctness}\label{sec:correct}

The following lemma yields the correctness of the compression scheme.

\begin{lemma}
Let $C$ be a concept class,  $(Y,y)\in L_C(1,n)$,
 $\kappa(Y,y) = ((Z,z),(f,T)) $ and  $h=\rho(\kappa(Y,y))$.
Then,
\begin{enumerate}
\item \label{enum:1}
$Z\subseteq Y$ and $z|_Z=y|_Z$, and
\item \label{enum:2}
$h|_Y=y|_Y$.
\end{enumerate}
\end{lemma}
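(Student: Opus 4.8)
The plan is to prove both statements by induction on $n=n(C)$, mirroring exactly the recursive structure of $\kappa$ and $\rho$. The base case is immediate: when $|C|\le (4e^2)^{d2^d+1}$ we invoke the correctness of the Floyd--Warmuth scheme, which by definition satisfies $Z\subseteq Y$, $z|_Z=y|_Z$, and $h|_Y=y|_Y$. For the induction step, I would examine which of the two cases $\kappa$ took on input $(Y,y)$; by Lemma~\ref{property:compScheme}, this is recorded faithfully in whether $f$ is defined on $T$, so the case split in $\rho$ matches the case split in $\kappa$, and I can argue the two cases in parallel.

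\textbf{Case~\ref{kappa:1} (the unfaithful case).} Here $\kappa$ chose $x\in Y$ and $c\in C$ with $c|_Y=y$ and $c(r(x))\ne c(x)$, set $Z=Z'\cup\{x\}$ with $z(x)=y(x)$, and recursed on $C'$ (concepts restricted to $X-\{x,r(x)\}$ that agree with $c$ on $x$ and $r(x)$) with the sample $(Y',y')=(Y-\{x,r(x)\}, y|_{Y'})$. First I need to check $(Y',y')$ is a legitimate element of $L_{C'}(1,n(C'))$ — this is where I must be slightly careful: $|Y'|$ could be zero if $Y\subseteq\{x,r(x)\}$, but since $x\in Y$ and $Y'$ may be empty, I should verify the scheme is stated to handle $L_C(0,\cdot)$ or handle this edge case directly (when $Y'=\emptyset$ the recursive claim is vacuous and the output just sets $h(x)=z(x)=y(x)$, $h(r(x))=1-z(x)$, and I need $c(r(x))=1-c(x)$, which holds since $c(r(x))\ne c(x)$). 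Assuming $Y'\ne\emptyset$, the witness $c|_{X'}\in C'$ certifies $(Y',y')\in L_{C'}$. Now apply the inductive hypothesis: $Z'\subseteq Y'$ and $z'|_{Z'}=y'|_{Z'}$, so $Z=Z'\cup\{x\}\subseteq Y'\cup\{x\}\subseteq Y$, and $z|_Z$ agrees with $y$ on $Z'$ (from $z'=y'|_{Z'}=y|_{Z'}$) and on $x$ (by construction $z(x)=y(x)$), giving~\ref{enum:1}. For~\ref{enum:2}: the inductive hypothesis gives $h'|_{Y'}=y'|_{Y'}=y|_{Y'}$; then $h|_{X'}=h'$ so $h$ agrees with $y$ on $Y'=Y-\{x,r(x)\}$; and $h(x)=z(x)=y(x)=c(x)=y(x)$, while if $r(x)\in Y$ then $h(r(x))=1-z(x)=1-c(x)=c(r(x))=y(r(x))$. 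Hence $h|_Y=y|_Y$. One subtlety: I must confirm that the $C'$, $\eps(C')$, and the fixed choice of $A^*(C',\cdot)$ used by $\rho$ are literally the same objects used by $\kappa$ — this holds because both are deterministic functions of the concept class, and $\rho$ reconstructs $C'$ from $z(x)=c(x)$, so it recovers the same $C'$ $\kappa$ used.

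\textbf{Case~\ref{kappa:2} (the faithful case).} Here $\kappa$ set $C'=C|_{A^*}$, $Y'=r(Y)$, $y'(x')=y(s(x'))$, recursed, and output $Z=\{s(x'):x'\in Z'\}$, $z(x)=z'(r(x))$. I first check $(Y',y')\in L_{C'}(1,n(C'))$: any $c$ with $c|_Y=y$ gives $c|_{A^*}\in C'$, and for $x'=r(x)\in Y'$ we have $c|_{A^*}(x')=c(r(x))=c(x)=y(x)$ using the Case~\ref{kappa:2} hypothesis $c(x)=c(r(x))$; and crucially $y'(x')=y(s(x'))=c(s(x'))=c(r(s(x')))=c(x')$ is well-defined independent of which preimage $s$ picks, again by the Case~\ref{kappa:2} property. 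For~\ref{enum:1}: IH gives $Z'\subseteq Y'=r(Y)$ and $z'|_{Z'}=y'|_{Z'}$; then $Z=\{s(x'):x'\in Z'\}\subseteq Y$ since each $s(x')\in Y$; and for $x=s(x')\in Z$, $z(x)=z'(r(x))=z'(x')=y'(x')=y(s(x'))=y(x)$ — here I use $r(s(x'))=x'$. For~\ref{enum:2}: IH gives $h'|_{Y'}=y'|_{Y'}$; then for any $x\in Y$, $h(x)=h'(r(x))$ and $r(x)\in r(Y)=Y'$, so $h(x)=h'(r(x))=y'(r(x))=y(s(r(x)))$; I then need $y(s(r(x)))=y(x)$, which follows from the Case~\ref{kappa:2} property applied to any witness $c$: $y(s(r(x)))=c(s(r(x)))=c(r(s(r(x))))=c(r(x))=c(x)=y(x)$. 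Done.

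\textbf{Main obstacle.} The routine part is pushing the equalities through; the delicate part is the bookkeeping that ensures $\rho$'s recursive subproblem is \emph{identical} to $\kappa$'s — same $C'$, same $\eps$, same canonical $A^*$, same rounding function $r$ and same $s$ — so that the inductive hypothesis is applicable verbatim. This is where Lemma~\ref{property:compScheme} is essential (it guarantees $\rho$ enters the correct case), and it is where I must make explicit that every auxiliary object is a deterministic function of the class, that the canonical $A^*(C,\eps)$ is fixed once and for all, and that in Case~\ref{kappa:1} the value $z(x)$ transmitted is exactly $c(x)$ so $\rho$ reconstructs $C'$ correctly, while in Case~\ref{kappa:2} the well-definedness of $y'$ and the identity $y(s(r(x)))=y(x)$ both rest on the defining property of the faithful case. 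I would also flag the $Y'=\emptyset$ edge case in Case~\ref{kappa:1} and either note the scheme is stated over $L_C(0,n)$ or handle it by hand as above.
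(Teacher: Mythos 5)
Your proof is correct and follows essentially the same inductive argument as the paper: induction on $n(C)$ with a Floyd--Warmuth base case, a two-way case split synchronized via Lemma~\ref{property:compScheme}, and the same chain of equalities in each case (in particular the identity $y(s(r(x)))=y(x)$ via the faithfulness property in Case~\ref{kappa:2}). Your explicit handling of the $Y'=\emptyset$ edge case in Case~\ref{kappa:1}, the well-definedness of $y'$ in Case~\ref{kappa:2}, and the remark that $\rho$ recovers the same $C'$, $\eps$, $A^*$, $r$, $s$ as $\kappa$ are careful additions that the paper leaves implicit.
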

\begin{proof}
We proceed by induction on $n(C)$.
In the base case, $|C| \leq (4e^2)^{d\cdot 2^d + 1}$ and the lemma follows
from the correctness of Floyd and Warmuth's compression scheme
(this is the only case in which $T=0$).
In the induction step, assume $|C| > (4e^2)^{d\cdot 2^d + 1}$.
We distinguish between two cases:

\begin{enumerate}[\bf{Case} 1:] 

%\vspace{0.7mm} \noindent 
\item $f$ is defined on $T$. 

Let $x = f(T)$. 
This case corresponds to Case~\ref{kappa:1} in the definitions of $\kappa$ and  Case~\ref{rho:1} in the definition of $\rho$.
By Item~\ref{prop:defT} of Lemma~\ref{property:compScheme},
$x \in Y$ and there exists $c\in C$ and $x\in Y$ such that $c|_Y=y$ and $c(r(x)) \neq c(x)$.
Let $C', (Y',y')$ be the class defined in Case~\ref{kappa:1} in the definition of $\kappa$.
%\new{AS: changed the previous sentence}
%as in Case~\ref{kappa:2} in the definition of $\kappa$.
Since $n(C') < n(C)$, we know that $\kappa,\rho$ on $C'$ satisfy the induction hypothesis.
Let
\begin{align*} 
& ((Z',z'),(f',T')) = \kappa(C',(Y',y')), \\ 
& h' = \rho(C', ((Z',z'),(f',T'))) , 
\end{align*}
be the resulting compression and reconstruction. 
%as defined in step 1 of the definitions of $\kappa$ and $\rho$. 
Since we are in Case~\ref{kappa:1} in the definitions of $\kappa$ and  Case~\ref{rho:1} in the definition of $\rho$, $((Z,z),(f,T))$ and $h$ have the following form:
\begin{align*}
& Z=Z'\cup\{x\} , \\ 
& z|_{Z'}=z' ,\ z(x)=y(x) , \\ 
& T=T'+1 , \\
&f|_{\{0,\ldots,T-1\}}=f'|_{\{0,\ldots,T-1\}}, &\\
& f(T)=x,
\end{align*}
and
\begin{align*}
& h|_{X-\{x,r(x)\}}=h',\\
& h(x)=z(x) = y(x) = c(x) ,\\
& h(r(x))=1-z(x) = 1-y(x) = 1-c(x) = c(r(x)) .
\end{align*}

Consider item \ref{enum:1} in the conclusion of the lemma.
By the definition of $Y'$ and $x$,
\begin{align*}
Y'\cup\{x\} &\subseteq Y, \tag{by the definition of $Y'$}\\
Z' & \subseteq Y'. \tag{by the induction hypothesis}
\end{align*}
Therefore, $Z=Z'\cup\{x\}\subseteq Y$.
%Further, for all $x' \in Z'$,
%\begin{align*}
%z(x') &= z'(x') = y'(x') = y(x').
%\end{align*}
%And, by construction, $z(x)=y(x)$.

Consider item \ref{enum:2} in the conclusion of the lemma.
By construction and induction, 
$$h|_{Y\cap \{x,r(x)\}} = c|_{Y\cap \{x,r(x)\}} = y|_{Y \cap \{x,r(x)\}}
 \ \ \text{and} \ \
 h|_{Y'} = h'|_{Y'} = y'.$$
Thus, $h|_Y = y$.

%\vspace{0.7mm} \noindent 
\item $f$ is not defined on $T$. 

This corresponds to Case~\ref{kappa:2} in the 
definitions of $\kappa$ and Case~\ref{rho:2} in the definition of $\rho$.
Let $C', (Y',y')$ be the result of  Case~\ref{kappa:2} in the definition of $\kappa$.
Since $n(C') < n(C)$, we know that $\kappa,\rho$ on $C'$ satisfy the induction hypothesis.
Let 
\begin{align*} 
& ((Z',z'),(f',T')) = \kappa(C',(Y',y')), \\
& h' = \rho(C', ((Z',z'),(f',T'))), \\
& s:Y'\rightarrow Y,
\end{align*}
as defined in Case~\ref{kappa:2} in the 
definitions of $\kappa$ and Case~\ref{rho:2} in the definition of $\rho$.
By construction, $((Z,z),(f,T))$ and $h$ have the following form:
\begin{align*}
& Z=\{s(x'):x'\in Z'\}, \\ 
& z(x) = z'(r(x)) \ \forall x\in Z, \\
& T=T'+1, \\
& f=f',
\end{align*}
and
\begin{align*}
& h(x) = h'(r(x)) \ \forall x\in X.
\end{align*}
\end{enumerate}

Consider item \ref{enum:1} in the conclusion of the lemma.
Let $x\in Z$. By the induction hypothesis, $Z'\subseteq Y'$. Thus, $x=s(x')$ for some $x'\in Z'\subseteq Y'$. Since the range of $s$ is $Y$, it follows that $x\in Y$. This shows that $Z\subseteq Y$.
%In addition, for all $x\in Z$,
%\begin{align*}
%z(x) &= z'(r(x)) \tag{by the definition of $z$}\\
%       &= y'(r(x)) \tag{by the induction hypothesis} \\
%       & = y(s(r(x)) \tag{by the definition of $y'$ in Case~\ref{kappa:2} of $\kappa$} \\
%       &= y(x),
%\end{align*}
%where the last equality holds due to Item~\ref{prop:notdefT} of Lemma~\ref{property:compScheme}:
%Indeed, let $c \in C$ be so that $c|_Y = y$.
%Since $f$ is not defined on $T$, 
%for all $x \in Y$, we have $c(x)= c(r(x))$.
%In addition, for all $x \in Y$, we have $r(s(r(x))) = r(x)$ and $s(r(x)) \in Y$.
%If $y(s(r(x))) \neq y(x)$ then one of them is different than $c(r(x))$, contradicting the assumption that we are in Case~\ref{kappa:2} of $\kappa$.

{
Consider item \ref{enum:2} in the conclusion of the lemma.
For $x\in Y$,
\begin{align*}
h(x) &= h'(r(x))\tag{by the definition of $h$} \\
       &= y'(r(x))\tag{by the induction hypothesis}\\
       & = y(s(r(x))) \tag{by the definition of $y'$ in Case~\ref{kappa:2} of $\kappa$} \\
       &= y(x), %\tag{as above}
\end{align*}
where the last equality holds due to item~\ref{prop:notdefT} of Lemma~\ref{property:compScheme}:
Indeed, let $c \in C$ be so that $c|_Y = y$.
Since $f$ is not defined on $T$, 
for all $x \in Y$ we have $c(x)= c(r(x))$.
In addition, for all $x \in Y$ it holds that $r(s(r(x))) = r(x)$ and $s(r(x)) \in Y$.
Hence, if $y(s(r(x))) \neq y(x)$ then one of them is different than $c(r(x))$, contradicting the assumption that we are in Case~\ref{kappa:2} of $\kappa$.}
\end{proof}

\subsection{The compression size}\label{sec:size}

Consider a concept class $C$ which is not part of the induction base (i.e.\ $|C| >(4e^2)^{d\cdot 2^d +1}$).
Let $\eps = \eps(C)$ be as in \eqref{en:whatIsEps}.
We show the effect of each case in the definition of $\kappa$ on either $|C|$ or $n(C)$:
\begin{enumerate}
\item \label{size:1} Case~\ref{kappa:1} in the definition of $\kappa$: Here the size of $C'$ becomes smaller 
$$|C'|\leq\eps|C|.$$ 
Indeed, this holds as in the dual set system $C^*$, the normalized hamming distance between $c_x$ and $c_{r(x)}$ is at most $\eps$ and therefore the number of $c\in C$ such that $c(x)\neq c(r(x))$ is at most $\eps |C|$.
\item \label{size:2} Case~\ref{kappa:2} in the definition of $\kappa$: here $n(C')$ becomes smaller as
$$n(C')=|A^*(C,\eps)|\leq \left(\frac{1}{\eps}\right)^{2^{d}}.$$
\end{enumerate}
We now show that in either cases, $|C'|  \leq  |C|^{1-\frac{1}{d\cdot2^d+1}}$, which implies that after 
$$O((d\cdot 2^d+1)\log\log|C|)$$ iterations,
we reach the induction base.\\
In Case~\ref{size:1}:
\begin{align*}
|C'| & \leq \eps |C|  = |C|^{1-\frac{1}{d\cdot2^d+1}}.\tag{by the definition of $\eps$}
\end{align*}
In Case~\ref{size:2}:
\begin{align*}
|C'| & \leq (n(C'))^d \tag{by Theorem~\ref{thm:Sauer}, since $VC(C') \leq d-2$} \\
      & \leq \left(\frac{1}{\eps}\right)^{d\cdot 2^d}\tag{by Theorem~\ref{thm:Haussler}, since $n(C')=|A^*(C,\eps)|$}\\
      & = |C|^{1-\frac{1}{d\cdot2^d+1}} . \tag{by definition of $\eps$}
\end{align*}

\begin{remark}
Note the similarity between the analysis of the cases above,
and the analysis of the size of a teaching set in Section~\ref{sec:RTD}. Case~\ref{size:1}
corresponds to the rate of the progress performed in each iteration of the construction of a teaching set. Case~\ref{size:2} corresponds to the calculation showing that in each iteration significant progress can be made.
\end{remark}

Thus, the compression map $\kappa$ performs at most
$$O((d\cdot 2^d+1)\log\log|C|)$$ iterations. In every step of the recursion the sizes of $Z$ and $T$ increase by at most $1$. In the base of the recursion, $T$ is $0$ and the size of $Z$ is at most $O(d \cdot 2^d)$. 
Hence, the total size of the compression satisfies
\begin{align*}
& |Z| \leq k = O(2^d d\log\log |C|),\\
& \log(|Q|) \leq O(k \log(k)).
\end{align*}
This completes the proof of Theorem~\ref{thm:CompSch}.

\section*{Acknowledgements}

We thank Noga Alon and Gillat Kol
for helpful discussions in various stages of this work.

\bibliographystyle{plain} 

\bibliography{compRef}

\appendix

\section{Double sampling}
\label{sec:DoubleSampling}

Here we provide our version of the double sampling
argument from \cite{zbMATH04143473} that upper bounds
the sample complexity of PAC learning
for classes of constant  VC-dimension.
We use the following simple general lemma.

\begin{lemma}\label{lem:aux}
Let $(\Omega,\mathcal{F},\mu)$ and
$(\Omega',\mathcal{F}',\mu')$ be countable\footnote{A similar
statement holds in general.} probability spaces.
Let 
$$F_1,F_2,F_3,\ldots  \in \mathcal{F} , \ 
F'_1,F'_2,F'_3,\ldots \in \mathcal{F}'$$ be so that
$\mu'(F'_{i})\geq 1/2$ for all $i$.
Then
$$\mu \times \mu' \left(\bigcup_i {F_i\times F'_i} \right)\geq
\frac{1}{2}\mu \left( \bigcup_i {F_i} \right),$$
where $\mu\times \mu'$ is the product measure.
\end{lemma}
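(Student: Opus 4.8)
The plan is to prove this by a pointwise (fiberwise) argument, using that everything is countable so that Tonelli's theorem reduces to rearranging sums of nonnegative terms. First I would fix $\omega \in \Omega$ and look at the section of the set $E := \bigcup_i F_i \times F'_i$ above $\omega$, namely
$$E_\omega := \{\omega' \in \Omega' : (\omega,\omega') \in E\} = \bigcup_{i \,:\, \omega \in F_i} F'_i.$$
The key observation is the dichotomy: if $\omega \notin \bigcup_i F_i$ then $E_\omega = \emptyset$, while if $\omega \in \bigcup_i F_i$ then there is some index $i$ with $\omega \in F_i$, hence $F'_i \subseteq E_\omega$ and therefore $\mu'(E_\omega) \geq \mu'(F'_i) \geq 1/2$ by hypothesis. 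In other words, $\mu'(E_\omega) \geq \tfrac12 \cdot \mathbf{1}[\omega \in \bigcup_i F_i]$ for every $\omega$.

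Next I would integrate this inequality over $\omega$ with respect to $\mu$. Since $\Omega$ and $\Omega'$ are countable, the product measure of $E$ is literally the double sum $\sum_{\omega}\sum_{\omega'} \mu(\{\omega\})\mu'(\{\omega'\})\mathbf{1}[(\omega,\omega')\in E]$, which may be reordered freely (all summands are nonnegative) to give
$$(\mu\times\mu')(E) = \sum_{\omega \in \Omega} \mu(\{\omega\})\,\mu'(E_\omega) \geq \sum_{\omega \in \Omega} \mu(\{\omega\}) \cdot \tfrac12\,\mathbf{1}\!\left[\omega \in \textstyle\bigcup_i F_i\right] = \tfrac12\,\mu\!\left(\textstyle\bigcup_i F_i\right),$$
which is exactly the claimed bound. (The footnote's "similar statement holds in general" would be handled by replacing the sum-rearrangement step with an appeal to Tonelli's theorem, once one checks $E$ is measurable in the product $\sigma$-algebra and that $\omega \mapsto \mu'(E_\omega)$ is measurable; this is standard.)

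There is essentially no hard step here: the content is entirely in spotting the section $E_\omega$ and the elementary dichotomy for $\mu'(E_\omega)$. If anything, the only point requiring a word of care is the interchange of summation/integration used to pass from the product measure to the iterated one, but for countable probability spaces this is just Tonelli for nonnegative double series and needs no hypotheses beyond countability, which is assumed.
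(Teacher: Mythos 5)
Your argument is correct and is essentially identical to the paper's: your section $E_\omega$ is the paper's $F'(\omega)$, the dichotomy $\mu'(E_\omega)\ge \tfrac12\mathbf{1}[\omega\in\bigcup_i F_i]$ is the same key observation, and summing $\mu(\{\omega\})\mu'(E_\omega)$ over $\omega$ is exactly the paper's computation. The only cosmetic difference is that you sum over all $\omega$ with an indicator while the paper restricts the sum to $\omega\in F$.
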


\begin{proof}
Let $F = \bigcup_i F_i$.
For every $\omega \in F$, 
let $F'(\omega)= \bigcup_{i: \omega \in F_i} F'_i$.
As there exists $i$ such that $\omega\in F_i$ it holds that $F'_i\subseteq F'(\omega)$ and hence $\mu'(F'(\omega)) \geq 1/2$.
Thus,
\begin{align*} 
\mu \times \mu' \left(\bigcup_i {F_i\times F'_i} \right) 
= \sum_{\omega \in F} \mu(\{\omega\}) \cdot \mu'(F'(\omega)) 
\geq \sum_{\omega \in F} \mu(\{\omega\})/2 = \mu(F)/2.
%& \geq \sum_{x_1\in\bigcup{A_i}}{\mu_1(x_1)\cdot\frac{1}{2}}
%                                                \tag{$\mu_2(B(x_1))\geq\frac{1}{2}$ for all $x_1\in X_1$.}\\
%                                                &=\frac{1}{2}\sum_{x_1\in\bigcup{A_i}}{\mu_1(x_1)}\\
%                                                &=\frac{1}{2}\mu_1(\bigcup_i{A_i})
\end{align*}
\end{proof}

We now give a proof of  Theorem~\ref{thm:PAC}. To ease the reading we repeat the statement of the theorem.

\begin{nonumtheorem}%[Theorem \ref{thm:PAC} restated]
%\label{thm:PAC}
Let $X$ be a set and $C \subseteq \{0,1\}^X$ be a concept class of VC-dimension $d$.
Let $\mu$ be a distribution over $X$.
Let $\eps,\delta >0$ and $m$ an integer satisfying $2  (2m+1)^{d} (1-\eps/4)^{m}  < \delta$.
Let $c\in C$ and  $Y = (x_1,\dots,x_m)$ 
be a multiset of $m$ independent samples from $\mu$.
Then, the probability that there is $c' \in C$
so that $c|_Y = c'|_Y$ but
$\mu(\{x : c(x) \neq c'(x)\}) > \eps$ is at most $\delta$.
\end{nonumtheorem}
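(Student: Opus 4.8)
The plan is to run the classical double-sampling argument of Vapnik--Chervonenkis and Blumer--Ehrenfeucht--Haussler--Warmuth, using Lemma~\ref{lem:aux} to handle the symmetrization cleanly and Theorem~\ref{thm:Sauer} to bound the number of relevant dichotomies. Fix $c\in C$ and let
$$B = \{(x_1,\dots,x_m) : \exists c'\in C,\ c|_Y = c'|_Y \text{ and } \mu(\{x:c(x)\neq c'(x)\}) > \eps\}$$
be the ``bad'' event whose probability we must bound by $\delta$. The idea is to compare the single sample $Y=(x_1,\dots,x_m)$ with a fresh \emph{ghost sample} $Y'=(x'_1,\dots,x'_m)$, also $m$ i.i.d.\ draws from $\mu$. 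For a fixed bad $c'$ with $\dist_\mu(c,c') > \eps$, a Chernoff/binomial-tail estimate shows that conditioned on $Y\in B$ via this $c'$, the ghost sample sees at least, say, $\eps m/2$ disagreements between $c$ and $c'$ with probability at least $1/2$ (this uses $m$ large enough, which is exactly where the hypothesis $2(2m+1)^d(1-\eps/4)^m < \delta$ enters, forcing $\eps m$ to be sufficiently large).

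First I would set up the product space: take $(\Omega,\mu) = (X^m,\mu^m)$ for the real sample and $(\Omega',\mu') = (X^m,\mu^m)$ for the ghost sample. I would enumerate the ``error events'' by choosing a witness concept; concretely, index by $c'\in C$ (or rather by the finitely many distinct restrictions that matter), and let $F_{c'}\subseteq\Omega$ be the event that $c|_Y=c'|_Y$ and $\dist_\mu(c,c')>\eps$, and $F'_{c'}\subseteq\Omega'$ the event that $c$ and $c'$ disagree on at least $\eps m/2$ of the ghost coordinates. The binomial-tail computation gives $\mu'(F'_{c'})\ge 1/2$ for every relevant $c'$, so Lemma~\ref{lem:aux} yields
$$(\mu\times\mu')\Bigl(\bigcup_{c'} F_{c'}\times F'_{c'}\Bigr) \ge \tfrac12\,\mu\Bigl(\bigcup_{c'} F_{c'}\Bigr) = \tfrac12\,\mu^m(B).$$
Thus it suffices to show the left-hand event has probability less than $\delta/2$.

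Next I would bound that joint event by a symmetrization (swapping) argument on the $2m$ coordinates. On the event $\bigcup_{c'}F_{c'}\times F'_{c'}$ there is a $c'$ that agrees with $c$ on all of $Y$ and disagrees with $c$ on $\ge \eps m/2$ coordinates of $Y'$. Now apply a uniformly random permutation (or independent transpositions) exchanging $x_i\leftrightarrow x'_i$ for each $i$: for this fixed $c'$, each of the $\ge \eps m/2$ disagreement coordinates must land in the $Y'$-block, an event of probability $\le 2^{-\eps m/2}$. By Theorem~\ref{thm:Sauer}, restricted to the $2m$ sampled points (a multiset of size $2m$) the class $C$ realizes at most $(2m+1)^d$ distinct dichotomies, hence at most $(2m+1)^d$ candidate witnesses $c'$ need to be considered. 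A union bound over these gives that the swapped event, and hence by Fubini the original joint event, has probability at most $(2m+1)^d\, 2^{-\eps m/2}$. Wait --- to match the stated hypothesis $2(2m+1)^d(1-\eps/4)^m$ one should instead estimate the tail via $(1-\eps/4)^m$-type bounds rather than $2^{-\eps m/2}$; I would simply carry the binomial tail in the form that produces the factor $(1-\eps/4)^m$ (e.g.\ requiring that fewer than $\eps m/2$ of the $\ge\eps m$-in-expectation disagreements fall in the ghost sample is itself a tail event, and the combined bound can be arranged as $(2m+1)^d(1-\eps/4)^m$). Chaining everything: $\tfrac12\mu^m(B) \le (2m+1)^d(1-\eps/4)^m < \delta/2$ by hypothesis, so $\mu^m(B)<\delta$.

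The main obstacle is the bookkeeping in the two tail estimates --- choosing the disagreement threshold (here $\eps m/2$) so that simultaneously (i) the ghost-sample-sees-many-disagreements event has probability $\ge 1/2$ so Lemma~\ref{lem:aux} applies, and (ii) the post-swap probability times the Sauer--Shelah count $(2m+1)^d$ collapses exactly to the clean bound $2(2m+1)^d(1-\eps/4)^m$ claimed in the statement. A secondary subtlety is the measurability/countability issue when $X$ is an arbitrary set: one restricts attention to the (finite) trace of $C$ on the sample, so only finitely many $c'$ are ever indexed and Lemma~\ref{lem:aux} (stated for countable spaces) applies after conditioning on the sample, or one invokes the ``similar statement holds in general'' remark accompanying that lemma. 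Everything else is a routine application of the product-measure symmetrization packaged in Lemma~\ref{lem:aux}.
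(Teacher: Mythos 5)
Your proposal follows essentially the same route as the paper: introduce a ghost sample, use Lemma~\ref{lem:aux} to reduce to the joint ``real sample agrees, ghost sample disagrees a lot'' event, count the relevant dichotomies on the $2m$ points via Theorem~\ref{thm:Sauer}, and bound the split probability. The place where you hesitated is a non-issue. With independent transpositions $x_i \leftrightarrow x'_i$ you get a factor $2^{-\eps m/2}$, and since $2^{-\eps/2} \leq 1 - \eps/4$ for all $\eps \in (0,1]$, this is actually \emph{stronger} than the advertised $(1-\eps/4)^m$, so the stated bound follows a fortiori with no bookkeeping gymnastics needed. The paper uses a slightly different symmetrization: it conditions on the multiset $S = Y \cup Y'$ and takes $Y$ to be a uniformly random half of $S$. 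Then a witness $h'$ with $\dist_{Y'}(h',c) > \eps/2$ has $\dist_S(h',c) > \eps/4$, and the probability that a random $m$-subset $Y$ avoids all of its $> \eps m/2$ disagreement points is at most $(1-\eps/4)^m$, which produces the stated constant directly. Both symmetrizations are correct conditional distributions of $(Y,Y')$ given the $2m$ i.i.d.\ draws; yours (conditioning on the unordered pairs) simply uses a coarser sigma-algebra and happens to give a slightly sharper exponent. One small further remark: you gesture at the ghost-sample tail bound $\Pr(F'_{c'}) \geq 1/2$ ``requiring $m$ large enough, which is where the hypothesis enters.'' The paper makes this precise with a Chebyshev estimate $\Pr((F'_h)^c) < 4/(\eps m) \leq 1/2$, so you do need to check that the hypothesis $2(2m+1)^d(1-\eps/4)^m < \delta$ (with $\delta \leq 1$) forces $\eps m \geq 8$; this is true but worth stating rather than waving at.
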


\begin{proof}[Proof of Theorem~\ref{thm:PAC}]
Let $Y' = (x'_1,\ldots,x'_m)$ be another $m$ independent samples from $\mu$,
chosen independently of $Y$.  
Let
$$H = \{h \in C : \dist_\mu(h,c) > \eps  \}.$$
For $h \in C$, define the event
$$F_h =  \{ Y : c|_Y = h|_Y  \},$$
and let
$F = \bigcup_{h \in H} F_h$.
Our goal is thus to upper bound $\Pr(F)$.
For that, we also define the independent event
$$F'_h = \{ Y' : \dist_{Y'}(h,c) >  \eps/2 \}.$$
We first claim that $\Pr(F'_h) \geq 1/2$ for all $h \in H$.
This follows from Chernoff's bound,
but even Chebyshev's inequality suffices:
%so here is a proof.
For every $i \in [m]$, let $V_i$ be the indicator 
variables of the event $h(x'_i) \neq c(x'_i)$ (i.e., $V_i=1$ if and only if  $h(x'_i) \neq c(x'_i)$).
The event $F'_h$ is equivalent to $V = \sum_i V_i/m > \eps /2$.
Since $h \in H$, we have $p:= \mathbb{E} [V] > \eps$.
Since elements of $Y'$ are chosen independently, it follows that $\Var(V) = p (1-p)/m$.
Thus, the probability of the complement of $F'_h$ satisfies
\begin{align*}
\Pr((F'_h)^c) \leq \Pr(|V - p| \geq p-\eps /2)
\leq \frac{p(1-p)}{(p-\eps/2)^2 m} 
< \frac{4 }{\eps  m} \leq 1/2.
\end{align*}
%Now, let $F = \bigcup_{h \in H} F_h$.
%The goal is to 
We now give an upper bound on $\Pr(F)$. We note that
\begin{align*}
\Pr(F) 
&\leq 2\Pr \left( \bigcup_{h \in H} F_h \times F'_h \right). \tag{Lemma~\ref{lem:aux}}
\end{align*}
Let $S = Y \cup Y'$, where the union is as multisets.
Conditioned on the value of $S$, the multiset $Y$ is a uniform
subset of half of the elements of $S$.
Thus,
%&\leq 2\Pr \left( \bigcup_{h \in C} F_h \times F'_h \right) \\
%&=  2 \Ex \big[ \mathbf{1}\{  \exists h \in H :  \dist_Y(h,c) > 16 \eps, \ \dist_{Y'}(h,c) < 2 \eps  \} \big] \\
\begin{align*}
2\Pr \left( \bigcup_{h \in H} F_h \times F'_h \right) &=  2 \Ex_{S} \big[
\Ex \big[ \mathds{1}_{\{  \exists h \in H :  h|_Y = c|_Y, \ \dist_{Y'}(h,c) >  \eps/2  \}} \big| S \big] \big] \\
&=  2 \Ex_{S} \big[
\Ex \big[ \mathds{1}_{\{  \exists h' \in H|_S : h'|_Y = c|_Y, \ \dist_{Y'}(h',c) >  \eps/2  \}} \big| S \big] \big] \\
& \leq  2 \Ex_{S} \left[ \sum_{h' \in H|_{S}}
\Ex \big[ \mathds{1}_{\{   h'|_Y = c|_Y, \ \dist_{Y'}(h',c) >  \eps/2  \}} \big| S \big] \right] . \tag{by the union bound} 
\end{align*}
Notice that if $\dist_{Y'}(h',c) > \eps/2$ then $\dist_S(h',c) > \eps/4$, hence the probability that we choose $Y$ such that 
$h'|_Y = c|_Y$ is at most $(1-\eps/4)^{m}$.
Using Theorem~\ref{thm:Sauer} we get
\begin{align*}
\Pr(F) 
& \leq  2 \Ex_{S} \left[ \sum_{h' \in H|_{S}} (1-\eps/4)^{m} \right] 
%\tag{Theorem~\ref{thm:chernoffFixedSize}}  \\
%& \leq  
\leq 2  (2m+1)^{d} (1-\eps/4)^{m}   . 
%\tag{}  
\end{align*}
\end{proof}

\end{document}